\newtheorem{theorem}{Theorem}
\newtheorem{definition}{Definition}
\newtheorem{remark}{Remark}
\newcommand{\reals}{{\mathbb R}}
\newcommand{\prox}{\mathrm{prox}}
\newcommand{\argmin}{\mathop{\rm argmin}}
\begin{document}

\title{Nonconvex Sparse Logistic Regression with Weakly Convex Regularization}

\author{Xinyue Shen and Yuantao Gu
\thanks{The authors are with Department of 
	Electronic Engineering and Tsinghua National Laboratory for Information Science and Technology (TNList), 
	Tsinghua University, Beijing 100084, China (e-mail: gyt@tsinghua.edu.cn).}}

\maketitle

\begin{abstract}
In this work we propose to fit a sparse logistic regression model
by a weakly convex regularized nonconvex optimization problem.
The idea is based on the finding that a weakly convex function
as an approximation of the $\ell_0$ pseudo norm 
is able to better induce sparsity than the commonly used $\ell_1$ norm.
For a class of weakly convex sparsity inducing functions,
we prove the nonconvexity of the corresponding sparse logistic regression problem,
and study its local optimality conditions and the choice of the regularization parameter 
to exclude trivial solutions.
Despite the nonconvexity,
a method based on proximal gradient descent
is used to solve the general weakly convex sparse logistic regression, 
and its convergence behavior is studied theoretically.
Then the general framework is applied to a specific weakly convex function, 
and a necessary and sufficient local optimality condition is provided.
The solution method is instantiated in this case as an iterative firm-shrinkage algorithm,
and its effectiveness is demonstrated in numerical experiments
by both randomly generated and real datasets.
\end{abstract}

\begin{IEEEkeywords}
sparse logistic regression, weakly convex regularization, nonconvex optimization, proximal gradient descent
\end{IEEEkeywords}

\section{Introduction}
Logistic regression is a widely used supervised machine learning
method for classification.
It learns a neutral hyperplane
in the feature space of a learning problem
according to a probabilistic model,
and classifies test data points accordingly.
The output of the classification result does not only give
a class label, but also a natural probabilistic interpretation.
It can be straightforwardly extended from two-class to multi-class problems,
and it has been applied to text classification \cite{Genkin2006},
gene selection and microarray analysis \cite{Zhu2004,Cawley2006},
combinatorial chemistry \cite{Cronin2002}, image analysis \cite{Murray2011, Ciocca2015}, etc.

In a classification problem $N$ pairs of training data
$\{({\bf x}^{(i)},y^{(i)}), i=1,\ldots,N \}$ are given,
where every point ${\bf x}^{(i)}\in\reals^d$ is a feature vector in the $d$
dimensional feature space, 
and $y^{(i)}$ is its corresponding class label. 
In a two-class logistic regression problem, 
$y^{(i)}\in\{0,1\}$, and it is assumed that the probability distribution of a class label
$y$ given a feature vector ${\bf x}$  is as the following
\begin{align}\label{label_prob}
p(y=1|{\bf x}; \bm{\theta}) &= \sigma(\bm{\theta}^{\rm T} {\bf x}) = \frac{1}{1+\exp (-\bm{\theta}^{\rm T} {\bf x})} \nonumber\\
p(y=0|{\bf x}; \bm{\theta}) &= 1 - \sigma(\bm{\theta}^{\rm T} {\bf x}) 
 = \frac{1}{1+\exp (\bm{\theta}^{\rm T} {\bf x})},
\end{align}
where $\sigma(\cdot)$ is the sigmoid function defined as above,
and $\bm{\theta}\in\reals^d$ is the model parameter to be learned.
When ${\bf x}^{\rm T}\bm{\theta} = 0$, the probability of having either label is $0.5$,
and thus the vector $\bm{\theta}$ gives the normal vector of a
neutral hyperplane.
Notice that if an affine hyperplane $\bm{\theta}^{\rm T} {\bf x}+b$ is to be considered,
then we can simply add an additional dimension with value $1$ to every feature vector, and then it will have the linear hyperplane form.

Suppose that the labels of the training samples are independently drawn
from the probability distribution \eqref{label_prob},
then it has been proposed to learn $\bm{\theta}$
by minimizing the negative log-likelihood
function, and the optimization problem is as follows
\begin{equation}\label{log_reg}
\begin{array}{ll}
\mbox{minimize} \quad l(\bm{\theta}),
\end{array}
\end{equation}
where $\bm{\theta}\in\reals^d$ is the variable,
and $l$ is the (empirical) logistic loss
\begin{equation}\label{logistic_loss}
l(\bm{\theta}) = \sum_{i=1}^N -\log p(y^{(i)} | {\bf x}^{(i)}; \bm{\theta}).
\end{equation}
Problem \eqref{log_reg} is convex and differentiable, and can be
readily solved \cite{hastie01}.
Once we obtain a solution $\hat{\bm{\theta}}$,
given a new feature vector $\bf x$, we can predict
the probability of the two possible labels according to the logistic model,
and take the one with larger probability by
\begin{align*}
y = {\bf 1}\left({\bf x}^{\rm T}\hat{\bm{\theta}} \geq 0 \right)
= \left\{
\begin{array}{ll}
1, \quad &{\bf x}^{\rm T}\hat{\bm{\theta}} \geq 0; \\
0, \quad&{\bf x}^{\rm T}\hat{\bm{\theta}} < 0.
\end{array}
\right.
\end{align*}

When the number of training samples $N$ 
is relatively small compared to the feature space dimension $d$,
adding a regularization can avoid over-fitting and enhance classification accuracy on test data, 
and the $\ell_2$ norm has long been used as a regularization function
\cite{Chaloner1989,Komarek2004,Minka2007}.
Furthermore, a sparsity-inducing regularizer can select a subset of all available features
that capture the relevant properties.
Since $\ell_1$ norm is a convex function that induces sparsity,
 the $\ell_1$ norm regularized sparse logistic regression prevails
\cite{Roth2002,Shevade2003,Lee2006,Genkin2006,Koh2007,plan2013}.

Despite that in general nonconvex optimization is hard to solve globally,
nonconvex regularization has been extensively studied to induce sparsity
in sparse logistic regression \cite{Loh2013,Yang2016} and other sparsity related topics
such as compressed sensing \cite{Chartrand2007,Chen2014,zhu2015}.
Inspired by results that tie binomial regression and one-bit compressed
sensing \cite{plan2013}, as well as results in compressed sensing indicating that 
weakly convex functions are able to better induce sparsity than 
the $\ell_1$ norm \cite{Chartrand2007,Chen2014,Shen2016},
in this work we propose to use a weakly convex function
in sparse logistic regression.

\subsection{Contribution and outline}
In this work, we consider a logistic regression problem in which 
the model parameter $\bm{\theta}$ is sparse,
i.e., the dimension $d$ can be large, and $\bm{\theta}$ is assumed to have
only $K$ non-zero elements, where $K$ is relatively small compared to $d$.
We propose the following problem that uses a weakly convex (nonconvex) function $J$ 
in sparse logistic regression
\begin{equation}\label{main}
\begin{array}{ll}
\mbox{minimize} \quad l(\bm{\theta}) + \beta J(\bm{\theta}),
\end{array}
\end{equation}
where the variable is $\bm{\theta}\in\reals^d$, 
$\beta>0$ is a regularization parameter, and $l$ is the logistic loss \eqref{logistic_loss}.
The contribution of this work can be summarized as the following.
\begin{itemize}
	\item We introduce weakly convex (nonconvex) sparsity inducing 
	functions into sparse logistic regression.
	We prove that the general weakly convex regularized optimization problem \eqref{main} is nonconvex.
	Its local optimality conditions are studied,
	as well as the range of the regularization parameter $\beta$ 
	to exclude $\bf 0$ as a local optimum.
	These will be in section \ref{sec:sparse_log_reg}.
	
	\item A solution method based on proximal gradient is proposed
	to solve the general problem of weakly convex regularized 
	sparse logistic regression \eqref{main}.
	Despite its nonconvexity,
	we provide a conclusion on the convergence behavior, which shows that
	the objective function is able to monotonically decrease and converge.
	These will be in section \ref{sec:solution_method}.
	
	\item We apply the general framework to a specific weakly convex regularizer.
	A necessary and sufficient condition on its local optimality is obtained, and the convergence analysis of the solution method for the general problem can also be applied. 
	These will be in section \ref{sec:specific}.
	In numerical experiments in section \ref{sec:exp}, we use this specific choice of function
	to verify the effectiveness of the model and the method on both randomly generated and real datasets.
\end{itemize}

\subsection{Notations}
In this work, for a vector $\bf x$, we use $\|{\bf x}\|_2$ to denote its $\ell_2$
norm, $\|{\bf x}\|_1$ to denote its $\ell_1$ norm, and $\|{\bf x}\|_\infty$ to denote
its infinity norm.
Its $i$th entry is denoted as ${\bf x}_i$.
For a matrix ${\bf X}$,
$\|{\bf X}\|$ is its operator norm, i.e., its largest singular value.
For a differentiable function $f: \reals^n \rightarrow \reals$,
its gradient is denoted as $\nabla f$, and if it is twice differentiable,
then its Hessian is denoted as $\nabla^2 f$. If $f$ is a convex function,
$\partial f({\bf x})$ is its subgradient set at point $\bf x$.
For a function $F:\reals\rightarrow \reals$, we use $F^\prime_-$
and $F^\prime_+$ to denote its left and right derivatives, and
$F^{\prime\prime}_-$ and $F^{\prime\prime}_+$ to denote its left and right second derivatives, if they exist.

\section{Related works}
\label{sec:related}
\subsection{$\ell_2$ and $\ell_1$ regularized logistic regression}

The $\ell_2$ regularized logistic regression problem
is as the following
\begin{equation*}
\begin{array}{ll}
\mbox{minimize} \quad l(\bm{\theta}) +\beta \|\bm{\theta}\|_2^2,
\end{array}
\end{equation*}
where $\bm{\theta}$ is the variable, $l$ is the logistic loss \eqref{logistic_loss},
and $\beta>0$ is the regularization parameter.
The solution can be interpreted as the maximum a posteriori probability (MAP) estimate
of $\bm{\theta}$, if $\bm{\theta}$ has a Gaussian prior distribution with zero mean 
and covariance $\beta {\bf I}$  \cite{Chaloner1989}.
The problem is strongly convex and differentiable,
and can be solved by methods such as the Newton, quasi-Newton, coordinate descent,
conjugate gradient descent, and iteratively reweighted least squares.
For example see \cite{Komarek2004,Minka2007} and references therein.

It has been known that minimizing the $\ell_1$ norm of a variable
induces sparsity to its solution,
so the following $\ell_1$ norm regularized sparse logistic regression has been widely
used to promote the sparsity of $\bm{\theta}$
\begin{equation}\label{prob:l1}
\begin{array}{ll}
\mbox{minimize} \quad  l(\bm{\theta}) + \beta \|\bm{\theta}\|_1,
\end{array}
\end{equation}
where $\bm{\theta}$ is the variable, 
$l$ is the logistic loss \eqref{logistic_loss},
and $\beta>0$ is a parameter balancing the sparsity
and the classification error on the training data.
In logistic regression, $\bm{\theta}_j = 0$ means that the $j$th feature
does not have influence on the classification result.
Thus, sparse logistic regression tries to find a few features that
are relevant to the classification results from a large number of features.
Its solution can also be interpreted as an MAP estimate,
when $\bm{\theta}$ has a Laplacian prior distribution
$p(\bm{\theta}) = (\beta/2)^d \exp (-\beta \|\bm{\theta}\|_1)$.

The problem \eqref{prob:l1} is convex but nondifferentiable, 
and several specialized solution methods have been proposed, such as an
iteratively reweighted least squares (IRLS) method \cite{Lee2006}
in which every iteration solves a LASSO \cite{Tibshirani1996},
a generalized LASSO method \cite{Roth2002},
a coordinate descent method \cite{Genkin2006}, 
a Gauss-Seidel method \cite{Shevade2003},
an interior point method that scales well to large problems \cite{Koh2007},
and some online algorithms such as \cite{Perkins03}.

\subsection{Nonconvex sparse logistic regression and SVMs}

The work \cite{Loh2013} studies properties of local optima
of a class of nonconvex regularized M-estimators
including logistic regression and the convergence behavior
of a proposed composite gradient descent solution method. 
The nonconvex regularizers considered
in their work overlap with the ones in this work,
but they have a convex constraint in addition.

Difference of convex (DC) functions are proposed
in works such as \cite{LeThi2008,Cheng2013,Yang2016}
 to approximate the $\ell_0$ pseudo norm and work as the regularization 
for feature selection in logistic regression and support vector machines (SVMs). 
Their solution methods are based on the difference of convex functions algorithm (DCA),
where each iteration involves solving a linear program.
In this work, our regularizer also belongs to the general class of DC functions,
but we study a more specific class, i.e., the weakly convex functions,
and there is no need to solve a linear program in every iteration to solve the problem,
given that the proximal operator of the weakly convex function has a closed form expression.

\subsection{Nonconvex compressed sensing}

From the perspective of reconstructing $\bm{\theta}$,
one-bit compressed sensing \cite{Boufounos2008} studies a similar problem,
where a sparse vector $\bm{\theta}$ (or its normalization $\bm{\theta}/\|\bm{\theta}\|_2$) 
is to be estimated from several one-bit measurements
$y^{(i)} = {\bf 1}(\bm{\theta}^{\rm T} {\bf x}^{(i)} \geq 0)$,
and compressed sensing \cite{Donoho2006} studies a problem
where a sparse $\bm{\theta}$ is to be estimated from several linear measurements
$y^{(i)} = \bm{\theta}^{\rm T} {\bf x}^{(i)}$.
In this setting ${\bf x}^{(i)}$ for $i = 1,\ldots,N$ are known sensing vectors.
Nonconvex regularizations have been used to promote sparsity
in both compressed sensing \cite{Chen2014,Chen2014c,Chen2015,Shen2016}
and one-bit compressed sensing \cite{zhu2015}.
These studies have shown that, despite that nonconvex optimization problems are usually hard to solve globally, with some proper choices of the nonconvex regularizers,
using some local methods 
their recovery performances can be better than that of the $\ell_1$ regularization,
both theoretically and numerically,
in terms of required number of measurements and robustness against noise.

\subsection{Weakly convex sparsity inducing function}

A class of weakly convex functions has been proposed to induce sparsity \cite{Chen2014}.
The definition is as the following.
\begin{definition}\label{def:1}\cite{Chen2014}
	The weakly convex sparsity inducing function $J$ is defined to be separable
	$$J({\bf x}) = \sum_{i=1}^n F({\bf x}_i),$$ 
	where the function $F:\reals \to \reals_+$ satisfies the following properties.
	\begin{itemize}
		\item Function $F$ is even and not identically zero, and $F(0) = 0$;
		\item Function $F$ is non-decreasing on $[0,\infty)$;
		\item The function $t\mapsto F(t)/t$ is nonincreasing on $(0,\infty)$;
		\item Function $F$ is weakly convex \cite{Vial1983} 
		on $[0,\infty)$ with nonconvexity parameter
		$\zeta>0$, i.e., $\zeta$ is the smallest positive scalar such that the function
		$$H(t) = F(t) + \zeta t^2$$ is convex.
	\end{itemize}
\end{definition}
According to the definition function $J$ is weakly convex,
and 
\[G({\bf x}) = J({\bf x}) + \zeta \|{\bf x}\|_2^2 = \sum_{i = 1}^d H({\bf x}_i)\]
 is a convex function.
Thus, $J$ belongs to a wider class of DC functions \cite{LeThi2008}, 
and both $J$ and $G$ are separable across all coordinates.
Since $\zeta>0$, the function $J$ is nonconvex,
and it can be nondifferentiable, which indicates that an optimization problem
with $J$ in the objective function can be hard to solve.
Nevertheless, the fact that by adding a quadratic term
the function becomes convex allows it to have some favorable properties,
such as that its proximal operator is well defined by a convex problem 
with a unique solution.

The proximal operator $\prox_{\beta J}(\cdot)$ of function $J$
with parameter $\beta$ is defined as
\begin{equation} \label{prox}
\prox_{\beta J} ({\bf v}) = 
\argmin \beta J({\bf x}) + \frac{1}{2} \|{\bf x}-{\bf v}\|_2^2,
\end{equation}
where the minimization is with respect to $\bf x$.
If $\beta$ is small enough so that $\beta \zeta <\frac{1}{2}$,
then the objective function in \eqref{prox} is strongly convex,
and the minimizer is unique.
For some weakly convex functions, 
their proximal operators have closed form expressions
which are relatively easy to compute.

For instance a specific $F$ satisfying Definition \ref{def:1} is defined as follows
\begin{align}\label{F}
F(t) = \left\{
\begin{array}{ll}
|t| - \zeta t^2 \quad &|t| \leq \frac{1}{2\zeta} \\
\frac{1}{4\zeta} \quad &|t| > \frac{1}{2\zeta}
\end{array}.
\right.
\end{align}
The function in \eqref{F} is also called minimax concave
penalty (MCP) proposed in \cite{Zhang2010} for penalized variable selection in linear regression, and has been used in both sparse logistic regression \cite{Loh2013}
and compressed sensing \cite{zhu2015,Shen2016}.
Its proximal operator with $\beta \zeta <\frac{1}{2}$ can be explicitly written as 
\begin{align}\label{prox_F}
\mathrm{prox}_{\beta F} (v) = \left\{
\begin{array}{ll}
0 & |v| < \beta \\
\frac{v - \beta \mathrm{sign}(v)}{1-2\beta\zeta} & \beta\leq |v| \leq \frac{1}{2\zeta}\\
v & |v| > \frac{1}{2\zeta}
\end{array}.
\right.
\end{align}
The proximal operator \eqref{prox_F} is also called firm shrinkage
operator \cite{Gao1997}, which generalizes the hard and soft shrinkage
corresponding to the proximal operators of the
$\ell_1$ norm and the pseudo $\ell_0$ norm, respectively.

\section{Sparse logistic regression with weakly convex regularization}
\label{sec:sparse_log_reg}

To fit a sparse logistic regression model,
we propose to try to solve problem \eqref{main}
with function $J$ belonging to the class of weakly convex
sparsity inducing functions in Definition \ref{def:1}.
Note that when the nonconvexity parameter $\zeta = 0$,
problem \eqref{main} becomes convex
and the standard $\ell_1$ logistic regression is an instance of it.

\subsection{Convexity}
An interesting observation is that at this point
problem \eqref{main} with the nonconvexity parameter $\zeta>0$
can either be convex
or nonconvex, depending on the data matrix 
$${\bf X} = \left({\bf x}^{(1)},\ldots, {\bf x}^{(N)}\right),$$
the regularization parameter $\beta$, as well as the nonconvexity
parameter $\zeta$.
In the following, from a perspective we have a conclusion that problem \eqref{main}
is nonconvex with any $\zeta>0$ (not necessarily sufficiently large).

\begin{theorem}\label{thm:convexity}
	If the dimension of the column space of the matrix $\bf X$
	is less than $d$, i.e.,
	matrix $\bf X$ does not have full row rank,
	then problem \eqref{main} is nonconvex for any $\zeta>0$.
\end{theorem}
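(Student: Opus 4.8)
The plan is to exhibit explicit points and a direction along which the objective $l(\bm\theta)+\beta J(\bm\theta)$ fails to be convex, exploiting the fact that the logistic loss $l$ is ``flat'' in any direction lying in the null space of $\bf X^{\rm T}$, i.e.\ in the orthogonal complement of the column space of $\bf X$. Concretely, if $\bf X$ does not have full row rank, then $\mathrm{row space}(\bf X)\subsetneq\reals^d$, so there exists a nonzero vector $\bf u$ with $\bf X^{\rm T}\bf u = \bf 0$, equivalently $({\bf x}^{(i)})^{\rm T}{\bf u}=0$ for all $i$. Then for any $\bm\theta$ the map $s\mapsto l(\bm\theta + s{\bf u})$ is constant, since $l$ depends on $\bm\theta$ only through the inner products $({\bf x}^{(i)})^{\rm T}\bm\theta$. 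Hence along the line $\{\bm\theta_0 + s{\bf u}: s\in\reals\}$ the objective reduces, up to the additive constant $l(\bm\theta_0)$, to $s\mapsto \beta J(\bm\theta_0 + s{\bf u})$.

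The next step is to choose $\bm\theta_0$ and reduce to a one-dimensional nonconvexity statement about $F$. Since $J$ is separable, $J(\bm\theta_0 + s{\bf u}) = \sum_{i=1}^d F((\bm\theta_0)_i + s{\bf u}_i)$. I would pick ${\bf u}$ so that (after scaling) it has a coordinate equal to $1$, say ${\bf u}_1 = 1$, and take $\bm\theta_0 = {\bf 0}$; then the objective restricted to this line is $\beta\big(F(s) + \sum_{i\geq 2} F(s{\bf u}_i)\big) + \text{const}$. It therefore suffices to show that $s\mapsto F(s) + \sum_{i\geq 2}F(s{\bf u}_i)$ is not convex on $\reals$ — and since a nonnegative sum of even, nondecreasing-on-$[0,\infty)$ functions of the stated type is again of that type, the cleanest route is to show directly that any single $F$ from Definition \ref{def:1} is nonconvex on $\reals$ whenever $\zeta>0$, and then argue the sum inherits this (or, more carefully, choose $\bm\theta_0$ so that only the first coordinate varies — but that is impossible since all coordinates of $s{\bf u}$ move together, so the sum argument is the honest one).

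The core one-dimensional fact is: if $F$ is even, $F(0)=0$, $F\not\equiv 0$, $F$ is nondecreasing on $[0,\infty)$, $t\mapsto F(t)/t$ is nonincreasing on $(0,\infty)$, and $H(t)=F(t)+\zeta t^2$ is convex with $\zeta>0$ the smallest such constant, then $F$ is not convex on $\reals$. I would prove this by noting that $F$ being even and nonnegative with $F(0)=0$, if $F$ were convex on $\reals$ it would be nondecreasing on $[0,\infty)$ (consistent) but the condition that $F(t)/t$ is nonincreasing forces $F$ to be concave-like on $[0,\infty)$: for $0<t_1<t_2$, $F(t_1)/t_1 \geq F(t_2)/t_2$ gives $F(t_1)\geq (t_1/t_2)F(t_2) = (t_1/t_2)F(t_2) + (1-t_1/t_2)F(0)$, i.e.\ $F$ lies above the chord from $(0,F(0))$ to $(t_2,F(t_2))$, which is the reverse of convexity. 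Combined with $F\not\equiv 0$ (so $F$ is strictly positive somewhere, making the inequality strict on a set of positive measure unless $F$ is linear on $[0,\infty)$) and with the hypothesis that $\zeta>0$ is genuinely needed to convexify (if $F$ were already convex, $\zeta=0$ would work, contradicting minimality), one concludes $F$ is nonconvex. Pulling this back, $s\mapsto \beta J(s{\bf u})$ is nonconvex, hence so is the objective, proving problem \eqref{main} is nonconvex.

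The main obstacle I anticipate is the bookkeeping in the multi-coordinate reduction: because every nonzero coordinate of ${\bf u}$ scales the argument of its own copy of $F$, the restricted objective is a sum $\sum_i F(\lambda_i s)$ with the $\lambda_i = {\bf u}_i$ fixed, and one must be sure this sum is nonconvex rather than have nonconvexities cancel. This is handled by observing that each summand $s\mapsto F(\lambda_i s)$ has second-derivative-type behavior bounded below by $-2\zeta\lambda_i^2$ with equality approached, and that a finite sum of functions each of which is ``exactly $\zeta_i$-weakly convex and no better'' remains weakly convex with parameter $\sum\zeta_i>0$ and is strictly weakly convex, hence nonconvex; alternatively, and more simply, one invokes that $G({\bf x})=J({\bf x})+\zeta\|{\bf x}\|_2^2$ is convex while $J$ is not, and a direct chord computation at a point where the relevant $F$ has strictly negative curvature settles it. I would present the clean single-$F$ chord argument as the heart of the proof and relegate the summation point to a one-line remark.
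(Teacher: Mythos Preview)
Your overall strategy matches the paper's: pick a nonzero ${\bf u}$ with ${\bf X}^{\rm T}{\bf u}={\bf 0}$, note that $l$ is constant along the line $s\mapsto s{\bf u}$, and reduce to showing that $s\mapsto J(s{\bf u})=\sum_i F(s{\bf u}_i)$ is nonconvex.

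The gap is in your handling of the sum. Your fix (a) is false as stated: if each $f_i$ is weakly convex with \emph{minimal} parameter $\zeta_i>0$, it does not follow that $\sum f_i$ has minimal parameter $\sum\zeta_i$, nor even that the minimal parameter of the sum is positive; nonconvexities concentrated at disjoint locations can fail to reinforce. Your fix (b) presupposes a point of strictly negative curvature of $F$, but Definition~\ref{def:1} does not require $F$ to be twice differentiable. The ingredients you have identified --- $F(t)/t$ nonincreasing and $F$ not linear (from $\zeta>0$) --- are the right ones; what is missing is turning them into a \emph{strict} chord violation that survives summation over the coordinates of ${\bf u}$.

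The paper does exactly this, and it is not a one-line remark. From $F(t)/t$ nonincreasing and $F$ not linear it extracts $t_0>0$ with $F(t)/t<F'_+(0)$ for all $t\geq t_0$. Hence for each ${\bf u}_i>0$ one has $F(t{\bf u}_i)<F'_+(0)\,t{\bf u}_i$ once $t$ is large, and by evenness $F(t{\bf u}_i)<F'_-(0)\,t{\bf u}_i$ for ${\bf u}_i<0$. Summing these coordinatewise strict inequalities gives $J(t{\bf u})<t\,{\bf h}^{\rm T}{\bf u}$ for a specific ${\bf h}\in\partial J({\bf 0})$, which together with $l(t{\bf u})=l({\bf 0})$ violates the convex subgradient inequality at $\bm{\theta}_0={\bf 0}$. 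The point is that the paper compares $J(t{\bf u})$ to the supporting line at the origin rather than invoking second-order information, and because the strict inequality holds term by term it passes to the sum without any possibility of cancellation. Your chord observation $F(t_1)\geq (t_1/t_2)F(t_2)$ is the non-strict version of this; to finish your route you would still need the strict-for-large-$t$ step, at which point you have reproduced the paper's argument.
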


\begin{remark}
	For the data matrix $\bf X$,
	if the number of data points $N$ is less than the dimension $d$,
	which is a typical situation where regularization is needed,
	or the data points are on a low dimensional subspace in $\reals^d$,
	then the dimension of the column space of $\bf X$ is less than $d$.
	In this work, we do not require that $\bf X$ has full row rank,
	so in general the problem \eqref{main} that we try to solve is nonconvex.
\end{remark}

\begin{proof}
If problem \eqref{main} is convex, i.e., its objective is convex,
then the subgradient set of the objective at any $\bm{\theta}$ is
\[
\nabla l(\bm{\theta}) + \beta \partial G(\bm{\theta}) - 2\beta\zeta\bm{\theta},
\]
where the plus and minus signs operate on every element of the
set $\partial G(\bm{\theta}) $. We denote 
$ \partial G(\bm{\theta}) - 2\zeta\bm{\theta}$ as $\partial J(\bm{\theta})$.
If  \eqref{main} is convex,
the following must hold for any $\bm{\theta}_0, \bm{\theta}$ and
any subgradient ${\bf g}\in\nabla l(\bm{\theta}_0) + \beta \partial J(\bm{\theta}_0)$
\begin{align}\label{cvx}
l(\bm{\theta}) + \beta J(\bm{\theta}) \geq l(\bm{\theta}_0) + \beta J(\bm{\theta}_0)
+ {\bf g}^{\rm T}(\bm{\theta} - \bm{\theta}_0).
\end{align}
In the following we will construct $\bm{\theta}_0$, $\bm{\theta}$, and $\bf g$
such that \eqref{cvx} does not hold.

From Definition \ref{def:1}, $H(t) = F(t) + \zeta t^2$ is convex,
so $H^{\prime}_-(t)$ and $H^{\prime}_+(t)$ always exist,
and $F^{\prime}_-(t)$ and $F^{\prime}_+(t)$ also exist according to
\[H^{\prime}_-(t) = F^\prime_-(t) + 2\zeta t \leq F^\prime_+(t) + 2\zeta t = H^{\prime}_+(t).\]
From Definition \ref{def:1} we also have  that  for all $t_1>t_2>0$
$$ \frac{F(t_1)}{t_1} \leq \frac{F(t_2)}{t_2} \leq F^\prime_+(0).$$
Because $F$ is not linear, there must exist $t_0>0$ 
such that for all $t>t_0$
\begin{align}\label{large_t}
\frac{F(t)}{t} \leq \frac{F(t_0)}{t_0} < F^\prime_+(0).
\end{align}
Note that $F^\prime_+(0)  = -F^\prime_-(0)>0$ is true,
because $F$ is even, and if $F^\prime_+(0) = 0$, then $F(t)/t \leq 0$
and $F(t) \geq 0$ will lead to $F(t) = 0$ for all $t>0$, which does not satisfy 
Definition \ref{def:1}.
	
Because $\bf X$ does not have full row rank, there exists ${\bf u} \neq {\bf 0}$ such that
${\bf u}^{\rm T} {\bf X} = 0$. For such $\bf u$, we have that
\[
l(t {\bf u}) = l({\bf 0}) 
\]
holds for any $t$.

Next we will find $t>0$ and
${\bf g} \in\nabla l({\bf 0}) + \beta \partial J({\bf 0})$,
such that
\[
\beta J(t{\bf u}) < \beta J({\bf 0}) + (t{\bf u} - {\bf 0})^{\rm T} ({\bf g} - \nabla l({\bf 0})).\]
Note that $J({\bf 0}) = 0$.
Suppose that for any $t>0$ and any
${\bf h} = ({\bf g} - \nabla l({\bf 0}))/\beta \in \partial J({\bf 0}) $,
which is equivalent to ${\bf h}_i\in[F^\prime_-(0), F^\prime_+(0)]$,
the following holds
\begin{align}\label{cvx2}
\sum_{i=1}^d F(t{\bf u}_i) =
J(t{\bf u}) \geq t {\bf u}^{\rm T} {\bf h} = t \sum_{i=1}^d {\bf h}_i {\bf u}_i.
\end{align}
Because of \eqref{large_t}, for every ${\bf u}_i>0$, there is a $t_i>0$ such that for all
$t>t_i$ 
\[
F(t {\bf u}_i) < F^{\prime}_+ (0) t {\bf u}_i,
\]
and for every ${\bf u}_i<0$ there is a $t_i>0$ such that for all
$t>t_i$ 
\[
F(t{\bf u}_i) < t F^\prime_-(0) {\bf u}_i.
\]
Thus, we have that the following holds for all $t>\max_i(t_i)$
\[
\sum_{i=1}^d F(t{\bf u}_i) 
< \sum_{{\bf u}_i > 0} t F^\prime_+(0) {\bf u}_i
+ \sum_{{\bf u}_i<0} tF^\prime_-(0) {\bf u}_i.
\]
In \eqref{cvx2}, by taking ${\bf h}_i = F^\prime_+(0)$ when ${\bf u}_i >0$
and ${\bf h}_i = F^\prime_-(0)$ when ${\bf u}_i<0$,
we have a contradiction.
Now we have proved that
\begin{align*}
l(t{\bf u}) + \beta J(t{\bf u})
&< l({\bf 0})  +{\bf g}^{\rm T} t{\bf u}\end{align*}
holds for some $t>0$
and a ${\bf g}\in\nabla l({\bf 0})+\beta \partial J({\bf 0})$,
so \eqref{cvx} does not hold for all $\bm{\theta}_0$ and $\bm{\theta}$,
and the objective in \eqref{main} is nonconvex.

\end{proof}

\subsection{Local optimality conditions}
In this part we discuss optimality conditions for problem \eqref{main}.
As revealed in the previous part, problem \eqref{main} can easily be nonconvex,
so its local optimality conditions are worth studying.
First we will have a sufficient condition for local optimality,
and next a necessary condition is unveiled.

What has already been known is that, for a DC function, 
its local minimum has to be a critical point \cite{An2005}
which is defined as the following.
	\begin{definition}\cite{An2005}
		A point ${\bf x}^\ast$ is said to be a critical point of 
		a DC function $g({\bf x}) - h({\bf x})$, where $g({\bf x})$ and $h({\bf x})$ are convex,
		if $\partial g({\bf x}^\ast) \cap \partial h({\bf x}^\ast) = \emptyset$.
	\end{definition}
When a function is differentiable, the above definition
is in consistent with the common definition that a critical point is a point
where the derivative equals zero.
Consequently, in our settings if $\bm{\theta}^\ast$ is a local optimum of problem
\eqref{main}, then we at least know that
$2\beta\zeta \bm{\theta}^\ast \in \partial (l+\beta G) (\bm{\theta}^\ast)$,
which is equivalent to 
\begin{equation}\label{criticalP}
	2\zeta \bm{\theta}^\ast - \frac{1}{\beta}\nabla l(\bm{\theta}^\ast) \in \partial G(\bm{\theta}^\ast).
\end{equation}
In the following we have some further conclusions,	
and let us begin with a theorem on a sufficient local optimality condition. 
\begin{theorem}\label{thm:local_opt_suf}
	Suppose that $H^{\prime\prime}_-(t)$ and $H^{\prime\prime}_+(t)$ exist for any $t\in\reals$ at which $H(t)$ is differentiable.
	If for every $\bm{\theta}^\ast_i$, $i = 1,\ldots,d$, one of the following conditions holds,
	then $\bm{\theta}^\ast$ is a local optimum of problem \eqref{main}.
	\begin{itemize}
		\item Function $F$ is not differentiable at $\bm{\theta}^\ast_i$,
		and 
		\begin{equation}\label{cond1}
		2\zeta\bm{\theta}^\ast_i - \frac{1}{\beta}\nabla l(\bm{\theta}^\ast)_i \in (H^\prime_-(\bm{\theta}^\ast_i), H^\prime_+(\bm{\theta}^\ast_i)).
		\end{equation}
		\item Function $F$ is differentiable at $\bm{\theta}^\ast_i$, 
		\begin{equation}\label{cond2}
		2\zeta\bm{\theta}^\ast_i - \frac{1}{\beta}\nabla l(\bm{\theta}^\ast)_i = H^\prime(\bm{\theta}^\ast_i),
		\end{equation}
		and
		both $H^{\prime\prime}_+(\bm{\theta}^\ast_i) $ and $H^{\prime\prime}_-(\bm{\theta}^\ast_i)$ are no less than $2\zeta$.
	\end{itemize}
	
\end{theorem}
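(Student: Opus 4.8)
I would show that $l(\bm{\theta}^\ast+{\bf d}) + \beta J(\bm{\theta}^\ast+{\bf d}) \ge l(\bm{\theta}^\ast) + \beta J(\bm{\theta}^\ast)$ for every perturbation ${\bf d}$ with $\|{\bf d}\|_\infty$ sufficiently small. The plan is to exploit the DC structure: write $\beta J(\bm{\theta}) = \beta G(\bm{\theta}) - \beta\zeta\|\bm{\theta}\|_2^2$, so that the objective of \eqref{main} is the convex function $l+\beta G$ minus the concave quadratic $\beta\zeta\|\bm{\theta}\|_2^2$, and recall that $G(\bm{\theta}) = \sum_i H(\bm{\theta}_i)$ is separable. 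I would then bound the smooth part by convexity, $l(\bm{\theta}^\ast+{\bf d}) - l(\bm{\theta}^\ast) \ge \nabla l(\bm{\theta}^\ast)^{\rm T}{\bf d}$, expand $\|\bm{\theta}^\ast+{\bf d}\|_2^2 - \|\bm{\theta}^\ast\|_2^2 = 2(\bm{\theta}^\ast)^{\rm T}{\bf d} + \|{\bf d}\|_2^2$ exactly, and reduce everything to estimating the separable sum $\sum_i\bigl(H(\bm{\theta}^\ast_i+{\bf d}_i) - H(\bm{\theta}^\ast_i)\bigr)$ coordinate by coordinate.

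For a coordinate $i$ of the first (nondifferentiable) type I would set $s_i := 2\zeta\bm{\theta}^\ast_i - \frac{1}{\beta}\nabla l(\bm{\theta}^\ast)_i$, which by \eqref{cond1} lies strictly inside $\partial H(\bm{\theta}^\ast_i) = [H^\prime_-(\bm{\theta}^\ast_i),\, H^\prime_+(\bm{\theta}^\ast_i)]$; choosing $\delta_i>0$ with $s_i\pm\delta_i\in\partial H(\bm{\theta}^\ast_i)$ and applying the subgradient inequality with $g = s_i + \delta_i\,\mathrm{sign}({\bf d}_i)$ gives $H(\bm{\theta}^\ast_i+{\bf d}_i) - H(\bm{\theta}^\ast_i) \ge s_i{\bf d}_i + \delta_i|{\bf d}_i|$ for all ${\bf d}_i$. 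For a coordinate of the second (differentiable) type, \eqref{cond2} identifies $s_i = H^\prime(\bm{\theta}^\ast_i)$, and the hypothesis $H^{\prime\prime}_\pm(\bm{\theta}^\ast_i)\ge 2\zeta$ gives the local second-order lower bound $H(\bm{\theta}^\ast_i+{\bf d}_i) - H(\bm{\theta}^\ast_i) \ge s_i{\bf d}_i + \zeta{\bf d}_i^2$ for $|{\bf d}_i|$ small enough. The key point is that in both cases the linear coefficient is the common value $s_i$, which is precisely what conditions \eqref{cond1} and \eqref{cond2} are arranged to guarantee.

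Summing over $i$, with $I_1$ and $I_2$ collecting the coordinates of the two types and ${\bf s} := 2\zeta\bm{\theta}^\ast - \frac{1}{\beta}\nabla l(\bm{\theta}^\ast)$, one gets $\beta\sum_i\bigl(H(\bm{\theta}^\ast_i+{\bf d}_i)-H(\bm{\theta}^\ast_i)\bigr) \ge \beta{\bf s}^{\rm T}{\bf d} + \beta\zeta\sum_{i\in I_2}{\bf d}_i^2 + \beta\sum_{i\in I_1}\delta_i|{\bf d}_i|$. Substituting into the objective difference, the term $\beta{\bf s}^{\rm T}{\bf d}$ cancels against $\nabla l(\bm{\theta}^\ast)^{\rm T}{\bf d}$ (from $l$) and $-2\beta\zeta(\bm{\theta}^\ast)^{\rm T}{\bf d}$ (from the expanded quadratic), the surplus $\beta\zeta\sum_{i\in I_2}{\bf d}_i^2$ cancels the matching part of $-\beta\zeta\|{\bf d}\|_2^2$, and what survives is $\bigl(l(\bm{\theta}^\ast+{\bf d}) - l(\bm{\theta}^\ast) - \nabla l(\bm{\theta}^\ast)^{\rm T}{\bf d}\bigr) + \beta\sum_{i\in I_1}|{\bf d}_i|(\delta_i - \zeta|{\bf d}_i|)$, which is nonnegative once $\|{\bf d}\|_\infty < \min_{i\in I_1}\delta_i/\zeta$ (and trivially so when $I_1=\emptyset$), using convexity of $l$ for the first bracket. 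Hence $\bm{\theta}^\ast$ is a local optimum of \eqref{main}. The main obstacle is the second-order estimate in the differentiable case: one must show $H(\bm{\theta}^\ast_i+{\bf d}_i) - H(\bm{\theta}^\ast_i) \ge H^\prime(\bm{\theta}^\ast_i){\bf d}_i + \zeta{\bf d}_i^2$ in a neighborhood, and this is exactly where $H^{\prime\prime}_\pm(\bm{\theta}^\ast_i)\ge 2\zeta$ enters—the constant $\zeta$ being calibrated precisely to absorb the concave term $-\beta\zeta\|{\bf d}\|_2^2$ produced by the DC split. The remaining cancellations of the linear terms are just the criticality relation \eqref{criticalP} rearranged, and are routine.
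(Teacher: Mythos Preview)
Your proposal is correct and follows essentially the same route as the paper: both use the DC split $J=G-\zeta\|\cdot\|_2^2$, establish a coordinate-wise lower bound on $H(\bm{\theta}^\ast_i+{\bf d}_i)-H(\bm{\theta}^\ast_i)$ (via a subgradient margin in the nondifferentiable case and the second-order condition $H^{\prime\prime}_\pm\ge 2\zeta$ in the differentiable case), sum, and cancel the linear terms against $\nabla l(\bm{\theta}^\ast)$ and $-2\beta\zeta\bm{\theta}^\ast$. The only cosmetic difference is that in the nondifferentiable case you extract a linear surplus $\delta_i|{\bf d}_i|$ and later dominate $\zeta{\bf d}_i^2$ with it, whereas the paper chooses the subgradient $s_i-\zeta(\bm{\theta}^\ast_i-\bm{\theta}_i)$ (which lies in $[H^\prime_-,H^\prime_+]$ for small $|{\bf d}_i|$ by \eqref{cond1}) to obtain the quadratic surplus $\zeta{\bf d}_i^2$ directly and arrive at the single inequality \eqref{coor_local_opt_ineq} uniformly for both coordinate types.
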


\begin{remark}
Function $H$ is convex,
and $F$ being not differentiable at $\bm{\theta}^\ast_i$
is equivalent to that $H$ is not differentiable at $\bm{\theta}^\ast_i$,
so $H^\prime_-(\bm{\theta}^\ast_i)< H^\prime_+(\bm{\theta}^\ast_i)$,
and the open interval in \eqref{cond1} exists.
\end{remark}

\begin{remark}
	The conditions \eqref{cond1} and \eqref{cond2} imply \eqref{criticalP},
	which is a necessary condition for $\bm{\theta}^\ast$ to be a local optimum
	and boils down to 
	\begin{align}\label{inclusion}
		H^\prime_-(\bm{\theta}^\ast_i) \leq 2\zeta\bm{\theta}^\ast_i - \frac{1}{\beta}\nabla l(\bm{\theta}^\ast)_i  \leq  H^\prime_+(\bm{\theta}^\ast_i)
	\end{align}
	for every $i$.
	As a sufficient condition, Theorem \ref{thm:local_opt_suf} requires more
	than \eqref{criticalP}. A direct observation is that \eqref{cond1} requires strict inequalities, while the ones in \eqref{inclusion} are not strict.
\end{remark}
	
\begin{proof}
	By definition of local optimality, 
	$\bm{\theta}^\ast$ is a local optimal point of problem \eqref{main},
	if and only if
	\begin{align*}
	\beta\zeta \|\bm{\theta}\|_2^2 +
	l(\bm{\theta}^\ast) + \beta G(\bm{\theta}^\ast) - \beta\zeta \|\bm{\theta}^\ast\|_2^2
	\leq l(\bm{\theta}) + \beta G(\bm{\theta}) 
	\end{align*}
	holds for any $\bm{\theta}$ in a small neighborhood of $\bm{\theta}^\ast$.
	It can be equivalently written as
	\begin{align}\label{local_opt_ineq}
	l(\bm{\theta}^\ast) + \beta G(\bm{\theta}^\ast) 
	\leq l(\bm{\theta}) + \beta G(\bm{\theta}) 
	+2\beta\zeta \langle \bm{\theta}^\ast-\bm{\theta}, \bm{\theta}^\ast \rangle 
	-\beta\zeta \|\bm{\theta}^\ast - \bm{\theta}\|_2^2.
	\end{align}
	We will prove that if every $\bm{\theta}^\ast_i$ for $i = 1,\ldots, d$
	satisfies either of the two conditions, then $\bm{\theta}^\ast$ is a local
	optimum, i.e., \eqref{local_opt_ineq} holds in a small neighborhood
	of $\bm{\theta}^\ast$.
	
	If the first condition \eqref{cond1} holds for $\bm{\theta}^\ast_i$, then together with the convexity inequalities that
	\begin{align*}
	H(\bm{\theta}_i^\ast) &\leq H(\bm{\theta}_i) + (\bm{\theta}_i^\ast - \bm{\theta}_i) H^\prime_- (\bm{\theta}_i^\ast) \\
	H(\bm{\theta}_i^\ast) &\leq H(\bm{\theta}_i) + (\bm{\theta}_i^\ast - \bm{\theta}_i) H^\prime_+ (\bm{\theta}_i^\ast),
	\end{align*}
	we know that 
	\[
	H(\bm{\theta}_i^\ast) \leq H(\bm{\theta}_i) + (\bm{\theta}_i^\ast - \bm{\theta}_i) 
	\left(2\zeta\bm{\theta}^\ast_i - \frac{\nabla l(\bm{\theta}^\ast)_i}{\beta} - \zeta(\bm{\theta}^\ast_i - \bm{\theta}_i)\right)
	\]
	holds for all $\bm{\theta}_i$ such that
	$$0\leq \zeta(\bm{\theta}_i^\ast-\bm{\theta}_i)
	\leq 2\zeta\bm{\theta}^\ast_i - \frac{\nabla l(\bm{\theta}^\ast)_i}{\beta} -  H^\prime_-(\bm{\theta}^\ast_i), $$
	and all $\bm{\theta}_i$ such that
	$$0\leq \zeta(\bm{\theta}_i-\bm{\theta}_i^\ast)\leq H^\prime_+(\bm{\theta}^\ast_i) - 2\zeta\bm{\theta}^\ast_i + \frac{\nabla l(\bm{\theta}^\ast)_i}{\beta}.$$
	Therefore, there exists $\delta_i >0$ such that 
	for all $(\bm{\theta}_i - \bm{\theta}_i^\ast)^2 \leq\delta_i$ we have
	\begin{align}\label{coor_local_opt_ineq}
		H(\bm{\theta}_i^\ast) \leq  H(\bm{\theta}_i ) + (\bm{\theta}_i^\ast - \bm{\theta}_i ) (2\zeta\bm{\theta}^\ast_i - \nabla l(\bm{\theta}^\ast)_i /\beta)
		- \zeta (\bm{\theta}_i  -  \bm{\theta}_i^\ast)^2. 
	\end{align}
		
	If the second condition in Theorem \ref{thm:local_opt_suf} holds for $\bm{\theta}^\ast_i$,
	then according to the second order Taylor expansion, there exists $\delta_i>0$ such that the following holds in a small neighborhood 
	$(\bm{\theta}_i  - \bm{\theta}_i ^\ast)^2 \leq \delta_i$,
	\[
	H(\bm{\theta}_i^\ast) + (\bm{\theta}_i  - \bm{\theta}_i^\ast) H^{\prime}(\bm{\theta}_i ^\ast) + \zeta (\bm{\theta}_i  -  \bm{\theta}_i^\ast)^2 \leq  H(\bm{\theta}_i ).
	\]
	Together with \eqref{cond2}, we also arrive at \eqref{coor_local_opt_ineq}.
	Therefore,  \eqref{coor_local_opt_ineq} holds for every coordinate with some $\delta_i>0$, and we have that
	\[
	G(\bm{\theta}^\ast) \leq G(\bm{\theta}) + (2\zeta \bm{\theta}^\ast - \nabla l(\bm{\theta}^\ast)/\beta)^{\rm T} (\bm{\theta} - \bm{\theta}^\ast) - \zeta \|\bm{\theta} - \bm{\theta}^\ast\|^2
	\]
	holds for $\bm{\theta}$ in a neighborhood $\|\bm{\theta} - \bm{\theta}^\ast \|^2 \leq \delta$ with $\delta = \min_i \delta_i >0$.
	Together with the fact that $l$ is convex, we have that
     \eqref{local_opt_ineq} holds in such neighborhood.
\end{proof}

Next, we will show a necessary condition for $\theta^\ast$ to be a local optimum.
\begin{theorem}\label{thm:local_opt_nec}
	Suppose that $H^{\prime\prime}_-(t)$ and $H^{\prime\prime}_+(t)$ exist
	for any $t\in\reals$ at which $H(t)$ is differentiable.
	If $\bm{\theta}^\ast$ is a local optimum of problem \eqref{main},
	then for every $\bm{\theta}^\ast_i$, $i = 1,\ldots,d$, one of the following
	conditions holds.
	\begin{itemize}
		\item Function $F$ is not differentiable at $\bm{\theta}^\ast_i$ and \eqref{inclusion} holds.
		\item Function $F$ is differentiable at $\bm{\theta}^\ast_i$,
		\eqref{cond2} holds,
		and both $H^{\prime\prime}_+(\bm{\theta}^\ast_i) $ and $H^{\prime\prime}_-(\bm{\theta}^\ast_i)$ are no less than 
		$2\zeta - 0.25 \|{\bf X}\|^2/\beta$.
	\end{itemize}
\end{theorem}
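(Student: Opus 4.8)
The plan is to derive the stated per-coordinate conditions from local optimality by testing the optimality inequality \eqref{local_opt_ineq} only against single-coordinate perturbations $\bm{\theta} = \bm{\theta}^\ast + s\,{\bf e}_i$, where ${\bf e}_i$ is the $i$-th standard basis vector and $s$ ranges over a small interval around $0$. Since $G$ and $\|\cdot\|_2^2$ are separable, every coordinate other than the $i$-th cancels in $G(\bm{\theta}) - G(\bm{\theta}^\ast)$ and in $\|\bm{\theta}\|_2^2 - \|\bm{\theta}^\ast\|_2^2$, while $l$ is twice continuously differentiable, so $l(\bm{\theta}^\ast + s{\bf e}_i) = l(\bm{\theta}^\ast) + s\,\nabla l(\bm{\theta}^\ast)_i + \tfrac{s^2}{2}\,\nabla^2 l(\bm{\theta}^\ast)_{ii} + o(s^2)$. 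Setting $c_i := 2\zeta\bm{\theta}^\ast_i - \tfrac{1}{\beta}\nabla l(\bm{\theta}^\ast)_i$, the inequality \eqref{local_opt_ineq} restricted to this direction becomes the scalar requirement that
\begin{align*}
0 \;\leq\; \beta\big(H(\bm{\theta}^\ast_i + s) - H(\bm{\theta}^\ast_i) - c_i s\big) + \tfrac{s^2}{2}\big(\nabla^2 l(\bm{\theta}^\ast)_{ii} - 2\beta\zeta\big) + o(s^2)
\end{align*}
hold for all sufficiently small $s$.

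First I would treat the case in which $F$ (equivalently $H$) is not differentiable at $\bm{\theta}^\ast_i$. Letting $s \to 0^+$ and using $H(\bm{\theta}^\ast_i + s) - H(\bm{\theta}^\ast_i) = H^\prime_+(\bm{\theta}^\ast_i)\,s + o(s)$, then letting $s \to 0^-$ and using $H^\prime_-(\bm{\theta}^\ast_i)\,s + o(s)$, and dividing by $s$ (reversing the inequality for $s<0$), one is forced to $H^\prime_-(\bm{\theta}^\ast_i) \leq c_i \leq H^\prime_+(\bm{\theta}^\ast_i)$, which is precisely \eqref{inclusion}. If instead $F$ is differentiable at $\bm{\theta}^\ast_i$, the same first-order argument gives $c_i = H^\prime(\bm{\theta}^\ast_i)$, i.e.\ \eqref{cond2}; the linear term in $s$ then vanishes, and invoking the standing hypothesis that $H^{\prime\prime}_-$ and $H^{\prime\prime}_+$ exist where $H$ is differentiable we may write $H(\bm{\theta}^\ast_i + s) - H(\bm{\theta}^\ast_i) - H^\prime(\bm{\theta}^\ast_i)s = \tfrac{s^2}{2}H^{\prime\prime}_\pm(\bm{\theta}^\ast_i) + o(s^2)$ (with $H^{\prime\prime}_+$ for $s>0$ and $H^{\prime\prime}_-$ for $s<0$). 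Dividing by $s^2/2$ and letting $s\to 0^\pm$ now forces $\beta H^{\prime\prime}_\pm(\bm{\theta}^\ast_i) + \nabla^2 l(\bm{\theta}^\ast)_{ii} - 2\beta\zeta \geq 0$, i.e.\ $H^{\prime\prime}_\pm(\bm{\theta}^\ast_i) \geq 2\zeta - \tfrac{1}{\beta}\nabla^2 l(\bm{\theta}^\ast)_{ii}$.

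It remains to replace the data-dependent quantity $\nabla^2 l(\bm{\theta}^\ast)_{ii}$ by the uniform bound in the statement. For the logistic loss one has $\nabla^2 l(\bm{\theta}) = {\bf X}\,{\bf D}(\bm{\theta})\,{\bf X}^{\rm T}$ with ${\bf D}(\bm{\theta})$ diagonal and entries $\sigma(\bm{\theta}^{\rm T}{\bf x}^{(k)})\big(1-\sigma(\bm{\theta}^{\rm T}{\bf x}^{(k)})\big) \in (0,\tfrac14]$, hence $\nabla^2 l(\bm{\theta}) \preceq \tfrac14\,{\bf X}{\bf X}^{\rm T}$ and $\nabla^2 l(\bm{\theta}^\ast)_{ii} = {\bf e}_i^{\rm T}\nabla^2 l(\bm{\theta}^\ast){\bf e}_i \leq \tfrac14\|{\bf X}^{\rm T}{\bf e}_i\|_2^2 \leq \tfrac14\|{\bf X}\|^2$; substituting yields $H^{\prime\prime}_\pm(\bm{\theta}^\ast_i) \geq 2\zeta - 0.25\,\|{\bf X}\|^2/\beta$, the claimed condition. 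Finally, since \eqref{cond2} together with differentiability is exactly \eqref{inclusion} specialized to a point where $H^\prime_- = H^\prime_+$, the two alternatives in the statement are exhaustive. I expect the only genuinely delicate point to be the bookkeeping of the remainder terms: one must keep the $s \to 0^+$ and $s \to 0^-$ limits separate (the one-sided derivatives of $H$ generally differ) and verify that the combined error from the Taylor expansion of $l$ and the one-sided expansions of $H$ is truly $o(s)$ in the non-differentiable case and $o(s^2)$ in the differentiable case, so that the first- and second-order conclusions are each valid.
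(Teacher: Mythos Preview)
Your proposal is correct and follows essentially the same approach as the paper: single-coordinate perturbations $\bm{\theta}=\bm{\theta}^\ast+s\,{\bf e}_i$ combined with first- and second-order expansions. The only minor differences are stylistic: the paper argues by contraposition (assume $H^{\prime\prime}_+(\bm{\theta}^\ast_i)<2\zeta-0.25\|{\bf X}\|^2/\beta$ and exhibit a descent direction), and it invokes the global Lipschitz bound $l(\bm{\theta})\leq l(\bm{\theta}^\ast)+\nabla l(\bm{\theta}^\ast)^{\rm T}(\bm{\theta}-\bm{\theta}^\ast)+\tfrac{1}{8}\|{\bf X}\|^2\|\bm{\theta}-\bm{\theta}^\ast\|_2^2$ directly rather than first using the exact Hessian entry and then bounding $\nabla^2 l(\bm{\theta}^\ast)_{ii}\leq\tfrac14\|{\bf X}\|^2$ as you do.
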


\begin{remark}
	Theorem \ref{thm:local_opt_suf} and Theorem \ref{thm:local_opt_nec} 
	can be used to certify if a point is a local optimum.
	We can see that there is a gap between them.
	If function $F$ is not differentiable at $\bm{\theta}^\ast_i$,
	the sufficient condition requires \eqref{inclusion} to hold with strict
	inequalities, while the necessary condition does not require them
	to be strict.
	If function $F$ is differentiable at $\bm{\theta}^\ast_i$,
	then the sufficient condition requires the left and right second
	derivatives of $H$ to be no less than $2\zeta$,
	while the necessary condition requires them to be no less than
	$2\zeta - 0.25 \|{\bf X}\|^2/\beta$, which is due to the contribution of function $l$
	to the convexity.
\end{remark}

\begin{proof}
	Since $\bm{\theta}^\ast$ is a local optimum,
	it is a critical point, so we only need to prove that for a critical point $\bm{\theta}^\ast$,
	if there exists $\bm{\theta}^\ast_i$ that does not
	satisfy the two conditions, then such a critical point
	$\bm{\theta}^\ast$ cannot be a local optimum.
	Suppose that there is a 
	$\bm{\theta}^\ast_i$ at which $F$ (and also $H$) is differentiable, and one of 
	$H^{\prime\prime}_-(\bm{\theta}^\ast_i)$ and $H^{\prime\prime}_+(\bm{\theta}^\ast_i)$
	is less than $2\zeta - 0.25 \|{\bf X}\|^2/\beta$. 
	Without loss of generality, we assume that
	\begin{align}\label{H_hessian}
	H^{\prime\prime}_+(\bm{\theta}^\ast_i) < 2\zeta - 0.25 \|{\bf X}\|^2/\beta.
	\end{align}
	Then we take 
	\[\bm{\theta} = (\bm{\theta}^\ast_1, \ldots, \bm{\theta}^\ast_{i-1}, \bm{\theta}^\ast_i - t,
	\bm{\theta}^\ast_{i+1},\ldots,\bm{\theta}^\ast_d)\]
	 for $t > 0$, so
	\[
	\bm{\theta} - \bm{\theta}^\ast = (0,\ldots, 0, -t, 0,\ldots,0),
	\]
	and
	\begin{align*}
	 l(\bm{\theta}) + \beta G(\bm{\theta}) - l(\bm{\theta}^\ast) - \beta G(\bm{\theta}^\ast)  
	& = l(\bm{\theta})  -  l(\bm{\theta}^\ast)  + \beta H(\bm{\theta}^\ast_i- t) - \beta H(\bm{\theta}^\ast_i ) \\
	& \leq   -\nabla l(\bm{\theta}^\ast)_i t + \frac{1}{8}  \|{\bf X}\|^2  t^2
	+  \beta H(\bm{\theta}^\ast_i- t) - \beta H(\bm{\theta}^\ast_i )\\
	& <   -\nabla l(\bm{\theta}^\ast)_i t + \frac{1}{8} \|{\bf X}\|^2  t^2
	- \beta  H^{\prime}(\bm{\theta}^\ast_i ) t 
	 +\left(\beta  \zeta -\frac{1}{8} \|{\bf X}\|^2 \right) t^2 \\
	&= - \nabla l(\bm{\theta}^\ast)_i t  
	-\beta  H^{\prime}(\bm{\theta}^\ast_i ) t
	+\beta \zeta t^2,
	\end{align*}
	where the first inequality holds in that the Lipchitz constant of $\nabla l$ is $0.25\|X\|^2$, and
	the second inequality holds for small positive $t$ according to the second order Taylor expansion and \eqref{H_hessian}.
	Consequently, together with \eqref{cond2} we have that in any small neighborhood there exists
	$\bm{\theta}$ such that 
		\begin{align*}
		l(\bm{\theta}^\ast) + \beta G(\bm{\theta}^\ast) > l(\bm{\theta}) + \beta G(\bm{\theta}) + 2\beta\zeta (\bm{\theta}^\ast - \bm{\theta})^{\rm T} \bm{\theta}^\ast 
		- \beta \zeta \|\bm{\theta}-\bm{\theta}^\ast\|^2
		\end{align*}
	holds, so $\bm{\theta}^\ast$ cannot be a local optimum.
	
\end{proof}

\subsection{Choice of the regularization parameter}
In this part, we will show a condition on the choice of the
regularization parameter $\beta$ to avoid ${\bf 0}$ to become
a local optimum of problem \eqref{main}.
More specifically, in the following,
we will show that if $\beta$ is larger than a certain value,
then ${\bf 0}$ will be a local optimum of problem \eqref{main},
and if $\beta$ is smaller than that value,
then $\bf 0$ will not even be a critical point.

\begin{theorem}\label{thm:beta}
	Suppose that the mean of the data points is subtracted from them, 
	i.e., $\sum_{i = 0}^N {\bf x}^{(i)} = 0$.
	If
	\[
	\beta < \frac{ \left\| \sum_{y^{(i)} = 1} {\bf x}^{(i)} \right\|_\infty }{F^\prime_+(0)},
	\]
	then $\bf 0$ is not a local minimum of problem \eqref{main}.
	If 
	\[
	\beta > \frac{ \left\| \sum_{y^{(i)} = 1} {\bf x}^{(i)} \right\|_\infty }{F^\prime_+(0)},
	\]
	then $\bf 0$ is a local minimum of problem \eqref{main},
\end{theorem}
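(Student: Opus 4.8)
The plan is to work directly with the objective $\Phi(\bm{\theta}) = l(\bm{\theta}) + \beta J(\bm{\theta})$ and compare its value near $\bm{\theta} = {\bf 0}$ to $\Phi({\bf 0}) = l({\bf 0})$, using tight first-order estimates for both $l$ and $J$ at the origin. The first step is to identify $\nabla l({\bf 0})$. Since $\nabla l(\bm{\theta}) = \sum_{i=1}^N (\sigma(\bm{\theta}^{\rm T}{\bf x}^{(i)}) - y^{(i)}){\bf x}^{(i)}$ and $\sigma(0) = \tfrac12$, the centering hypothesis $\sum_i {\bf x}^{(i)} = {\bf 0}$ gives $\nabla l({\bf 0}) = \tfrac12\sum_i {\bf x}^{(i)} - \sum_i y^{(i)}{\bf x}^{(i)} = -\sum_{y^{(i)}=1}{\bf x}^{(i)}$, so the threshold appearing in the statement is exactly $\|\nabla l({\bf 0})\|_\infty / F^\prime_+(0)$.

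Next I would record four elementary two-sided estimates valid near ${\bf 0}$. For $l$: convexity gives the lower bound $l(\bm{\theta}) \geq l({\bf 0}) + \nabla l({\bf 0})^{\rm T}\bm{\theta}$, and the $\tfrac14\|{\bf X}\|^2$-Lipschitz continuity of $\nabla l$ gives, via the descent lemma, the upper bound $l(\bm{\theta}) \leq l({\bf 0}) + \nabla l({\bf 0})^{\rm T}\bm{\theta} + \tfrac18\|{\bf X}\|^2\|\bm{\theta}\|_2^2$. For $J$: since $t\mapsto F(t)/t$ is nonincreasing on $(0,\infty)$ with limit $F^\prime_+(0)$ as $t\to 0^+$, we get $F(t)\leq F^\prime_+(0)|t|$ and hence $J(\bm{\theta})\leq F^\prime_+(0)\|\bm{\theta}\|_1$; and since $H(t)=F(t)+\zeta t^2$ is convex with $H^\prime_+(0) = F^\prime_+(0)$ and $H^\prime_-(0) = -F^\prime_+(0)$, the supporting-line inequality at $0$ yields $F(t)\geq F^\prime_+(0)|t|-\zeta t^2$, hence $J(\bm{\theta})\geq F^\prime_+(0)\|\bm{\theta}\|_1 - \zeta\|\bm{\theta}\|_2^2$.

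With these in hand the two assertions follow quickly. If $\beta F^\prime_+(0) < \|\nabla l({\bf 0})\|_\infty$, choose a coordinate $j$ with $|\nabla l({\bf 0})_j| = \|\nabla l({\bf 0})\|_\infty$ and move in the descent direction, $\bm{\theta} = -t\,\mathrm{sign}(\nabla l({\bf 0})_j){\bf e}_j$ with ${\bf e}_j$ the $j$th standard basis vector; the upper bounds give $\Phi(\bm{\theta})-\Phi({\bf 0}) \leq -\big(\|\nabla l({\bf 0})\|_\infty - \beta F^\prime_+(0)\big)t + \tfrac18\|{\bf X}\|^2 t^2$, which is strictly negative for all sufficiently small $t>0$, so ${\bf 0}$ is not a local minimum. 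If instead $\beta F^\prime_+(0) > \|\nabla l({\bf 0})\|_\infty$, the lower bounds together with $\nabla l({\bf 0})^{\rm T}\bm{\theta}\geq -\|\nabla l({\bf 0})\|_\infty\|\bm{\theta}\|_1$ give $\Phi(\bm{\theta})-\Phi({\bf 0}) \geq \big(\beta F^\prime_+(0) - \|\nabla l({\bf 0})\|_\infty\big)\|\bm{\theta}\|_1 - \beta\zeta\|\bm{\theta}\|_2^2$; bounding $\|\bm{\theta}\|_2^2 \leq \|\bm{\theta}\|_2\|\bm{\theta}\|_1$ shows this is nonnegative whenever $\|\bm{\theta}\|_2$ is small enough that $\beta\zeta\|\bm{\theta}\|_2 \leq \beta F^\prime_+(0) - \|\nabla l({\bf 0})\|_\infty$, so ${\bf 0}$ is a local minimum.

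The computation of $\nabla l$ and the descent-lemma estimate are routine; the only point that needs care is producing the matching upper and lower control of the nonsmooth term $J$ at the origin — in particular verifying that $F^\prime_+(0) = \lim_{t\to 0^+} F(t)/t$ is the exact slope governing \emph{both} bounds — and then checking that the quadratic remainders ($\tfrac18\|{\bf X}\|^2 t^2$ in one direction, $\beta\zeta\|\bm{\theta}\|_2^2$ in the other) are genuinely of lower order, which is precisely what makes both inequalities in the statement strict and leaves the boundary case $\beta F^\prime_+(0) = \|\nabla l({\bf 0})\|_\infty$ undetermined.
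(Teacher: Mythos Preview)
Your proof is correct and, for the second assertion (that ${\bf 0}$ is a local minimum when $\beta F^\prime_+(0) > \|\nabla l({\bf 0})\|_\infty$), essentially identical to the paper's: both combine the convexity lower bound for $l$ with the weak-convexity lower bound $J(\bm{\theta}) \geq F^\prime_+(0)\|\bm{\theta}\|_1 - \zeta\|\bm{\theta}\|_2^2$, the paper obtaining the latter by choosing a particular subgradient ${\bf g}\in\partial G({\bf 0})$ while you derive it directly from the supporting line of $H$ at $0$.

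For the first assertion the two arguments diverge. The paper observes that $\beta F^\prime_+(0) < \|\nabla l({\bf 0})\|_\infty$ is exactly the statement $-\nabla l({\bf 0})\notin \beta\,\partial G({\bf 0})$, so ${\bf 0}$ fails the critical-point necessary condition \eqref{criticalP} for DC local minima and the conclusion follows without any further estimate. You instead construct an explicit descent direction along a single coordinate, using the descent lemma for $l$ together with the upper bound $J(\bm{\theta})\leq F^\prime_+(0)\|\bm{\theta}\|_1$. Your route is more self-contained---it does not appeal to the DC critical-point theory quoted earlier in the paper---at the price of importing the Lipschitz constant $\tfrac14\|{\bf X}\|^2$ into this part of the argument, which the paper's proof does not need. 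Both are clean; the paper's is shorter because it cashes in on machinery already set up, while yours would stand alone even without that machinery.
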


\begin{proof}
	If $-\nabla l({\bf 0})  \notin \beta\partial G({\bf 0})$,
	then according to \eqref{criticalP}, $\bf 0$ is not a critical point of the objective
	in problem \eqref{main},
	so $\bf 0$ is not a local optimum of problem \eqref{main}.
	Since $\sum_{i = 0}^N {\bf x}^{(i)} = 0$, the condition becomes
	\[
		-\nabla l({\bf 0})  = \sum_{y^{(i)} = 1} {\bf x}^{(i)} \notin \beta \partial G({\bf 0}), 
	\]
	which is equivalent to that
	\[	    
	\beta F^\prime_+(0) <  \bigg\| \sum_{y^{(i)} = 1} {\bf x}^{(i)} \bigg\|_\infty.
	\]
	
	On the other side, if
	\[	    
		\beta F^\prime_+(0) > \bigg\| \sum_{y^{(i)} = 1} {\bf x}^{(i)} \bigg\|_\infty,
	\]
	then $\bf 0$ is a local optimum of problem \eqref{main}.
	To prove this, first notice that the following holds
	for any $\bm{\theta}$ and any ${\bf g}\in\partial G({\bf 0})$
	\begin{align*}
	l(\bm{\theta}) &\geq l({\bf 0}) + \bm{\theta}^{\rm T} \nabla l({\bf 0}), \\
	J(\bm{\theta}) &\geq G({\bf 0}) + {\bf g}^{\rm T} \bm{\theta} - \zeta \|\bm{\theta}\|_2^2.
	\end{align*}
	Thus, for any $\bm{\theta}$ and any ${\bf g}\in\partial G({\bf 0})$
	\[
	l(\bm{\theta}) + \beta J(\bm{\theta}) \geq l({\bf 0}) + \beta J({\bf 0}) + \bm{\theta}^{\rm T} \nabla l({\bf 0}) + \beta {\bf g}^{\rm T} \bm{\theta} - \beta\zeta\|\bm{\theta}\|_2^2.
	\]
	Then, for every $\bm{\theta}$ we take the following ${\bf g}\in\partial G({\bf 0})$
	\begin{align*}
	{\bf g}_i = \left\{
	\begin{array}{ll}
	F^\prime_+(0), &\bm{\theta}_i \geq 0 \\
	F^\prime_-(0),  &\bm{\theta}_i < 0,
	\end{array}
	\right.
	\end{align*}
	so we have
	\begin{align*}
	\bm{\theta}^{\rm T} (\nabla l({\bf 0}) +\beta {\bf g}) - \beta\zeta\|\bm{\theta}\|_2^2 
	= & \sum_{\bm{\theta}_i>0} \bm{\theta}_i (\nabla l({\bf 0})_i + \beta F^\prime_+(0)) 
	- \beta\zeta \bm{\theta}_i^2
	+ \sum_{\bm{\theta}_i<0} \bm{\theta}_i(\nabla l({\bf 0})_i +\beta F^\prime_-(0)) 
	- \beta\zeta \bm{\theta}_i^2 \\
	 > & 0,
	\end{align*}
	where the last inequality holds for all $\|\bm{\theta}\|_2$ small enough,
	in that $\nabla l({\bf 0})_i +\beta F^\prime_+(0)$ is strictly positive
	and $\nabla l({\bf 0})_i +\beta F^\prime_-(0)$ is strictly negative.
	Therefore, we have that in a small neighborhood of $\bf 0$,
	the following holds
	\[
	l(\bm{\theta}) + \beta J(\bm{\theta}) > l({\bf 0}) + \beta J({\bf 0}),
	\]
	which means that $\bf 0$ is a local minimum in this case.
	
	In the proof of Theorem \ref{thm:convexity}, we have that
	$F^\prime_+(0)>0$, so we reach the conclusions in Theorem \ref{thm:beta}.

\end{proof}

\section{A proximal gradient method}
\label{sec:solution_method}

In this section, we try to solve the weakly convex regularized sparse logistic regression problem \eqref{main}
 with any function $J$ satisfying Definition \ref{def:1}.
Since the logistic loss $l$ is differentiable and the proximal operator of
function $J$ can be well defined,
the method that we use is proximal gradient descent,
and the iterative update is as the following
\begin{equation}\label{alg}
\bm{\theta}_{k+1} = \prox_{\alpha_k\beta J} (\bm{\theta}_k - \alpha_k \nabla l(\bm{\theta}_k)),
\end{equation}
where $\alpha_k>0$ is a stepsize, 
and 
\begin{align*}
\nabla l(\theta_k) 
&= \sum_{i=1}^N \left( \sigma \left( \bm{\theta}_k^{\rm T}{\bf x}^{(i)} \right) - y^{(i)} \right) {\bf x}^{(i)}.
\end{align*}
Note that the update \eqref{alg} of the algorithm is equivalent to solving the following problem
\[
\mbox{minimize} \quad \alpha_k \beta J(\bm{\theta}) 
+ \frac{1}{2} \left\| \bm{\theta} - \bm{\theta}_k +  \alpha_k \nabla l(\bm{\theta}_k) \right\|_2^2,
\]
which is strongly convex for $\alpha_k \beta\zeta <1/2$.
The computation of the gradient can be distributed
in every $i$
and then summed up,
and the calculation of the proximal operator can be elementwise parallel,
in that the function $J$ is separable across the coordinates
according to its definition.

The stepsize $\alpha_k$ in the algorithm
can be chosen as a constant $\alpha$
or determined by backtracking.
In the following we prove its convergence
with the two stepsize rules.
\begin{theorem}\label{thm:converge}
	For stepsize $\alpha_k$ chosen from one of the following ways,
	\begin{itemize}
		\item  constant stepsize $\alpha_k = \alpha$
		and 
		\begin{equation}\label{alpha}
		\frac{1}{\alpha} > \max\left (2\beta\zeta, \frac{1}{8}\|{\bf X}\|^2 +\beta\zeta\right);
		\end{equation}
		\item  backtracking stepsize $\alpha_k = \eta^{n_k} \alpha_{k-1}$,
		where $\beta\zeta\alpha_0 < 1/2$, $0<\eta<1$, and $n_k$ is the smallest nonnegative integer for the following to hold
		\begin{align*}
		l(\bm{\theta}_k) \leq 
		&l(\bm{\theta}_{k-1}) + \langle \bm{\theta}_k - \bm{\theta}_{k-1},
		\nabla l(\bm{\theta}_{k-1}) \rangle
		+ \frac{1}{2\alpha_k} \| \bm{\theta}_{k-1} - \bm{\theta}_{k} \|_2^2
		\end{align*}
	\end{itemize}
	the sequence $\{\bm{\theta}_k\}$ 
	generated by the algorithm satisfies the following.
	\begin{itemize}
		\item  Objective function
	$l(\bm{\theta}_k) + \beta J(\bm{\theta}_k)$ is monotonically 
	non-increasing and convergent;
	\item The update of the iterates converges to $0$, i.e.,
	$$\|\bm{\theta}_{k} - \bm{\theta}_{k-1}\|_2 \rightarrow 0;$$
	\item
	The first order necessary local optimality condition will be approached,
	i.e., there exists ${\bf g}_{k} \in \partial G(\bm{\theta}_{k})$ for every $k$ such that
	\begin{equation}\label{claim3}
	\beta {\bf g}_{k}  - 2\beta\zeta \bm{\theta}_{k} + \nabla l(\bm{\theta}_{k})
	\rightarrow 0.
	\end{equation}
\end{itemize}
\end{theorem}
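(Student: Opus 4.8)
The plan is to follow the standard proximal-gradient descent-lemma argument, but carefully tracking the weak convexity of $J$ so that the surrogate stays strongly convex. Fix an iteration $k$ and abbreviate $\theta = \theta_k$, $\theta^+ = \theta_{k+1}$, $\alpha = \alpha_k$. The first step is the \emph{sufficient decrease} estimate. Since $\nabla l$ is Lipschitz with constant $L = \tfrac14\|\mathbf{X}\|^2$ (the logistic loss has Hessian $\sum_i \sigma(\cdot)(1-\sigma(\cdot))\,\mathbf{x}^{(i)}{\mathbf{x}^{(i)}}^{\rm T} \preceq \tfrac14 \mathbf{X}\mathbf{X}^{\rm T}$), the descent lemma gives
\[
l(\theta^+) \le l(\theta) + \langle \theta^+ - \theta, \nabla l(\theta)\rangle + \frac{L}{2}\|\theta^+ - \theta\|_2^2
\]
in the constant-stepsize case; in the backtracking case this is exactly the line-search exit condition with $L$ replaced by $1/\alpha_k$, and one must first note that backtracking terminates after finitely many steps because the inequality certainly holds once $1/\alpha_k \ge L$, so $\alpha_k$ is bounded below by $\eta/L$ (hence also $\beta\zeta\alpha_k < 1/2$ is maintained). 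Next I would use the defining optimality of the prox step: $\theta^+$ minimizes the strongly convex function $\theta \mapsto \beta J(\theta) + \tfrac{1}{2\alpha}\|\theta - (\theta - \alpha\nabla l(\theta))\|_2^2$, which has modulus of strong convexity $\tfrac1\alpha - 2\beta\zeta > 0$ by the stepsize condition. Comparing its value at $\theta^+$ with its value at $\theta$ yields
\[
\beta J(\theta^+) + \langle \theta^+ - \theta, \nabla l(\theta)\rangle + \frac{1}{2\alpha}\|\theta^+ - \theta\|_2^2 \le \beta J(\theta) - \Big(\frac{1}{2\alpha} - \beta\zeta\Big)\|\theta^+ - \theta\|_2^2 + \frac{1}{2\alpha}\|\theta^+-\theta\|_2^2,
\]
i.e. $\beta J(\theta^+) + \langle \theta^+-\theta,\nabla l(\theta)\rangle \le \beta J(\theta) - \big(\tfrac{1}{2\alpha}-\beta\zeta\big)\|\theta^+-\theta\|_2^2$. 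Adding this to the descent lemma and cancelling the inner-product terms produces
\[
l(\theta^+) + \beta J(\theta^+) \le l(\theta) + \beta J(\theta) - \Big(\frac{1}{\alpha} - \beta\zeta - \frac{L}{2}\Big)\|\theta^+ - \theta\|_2^2,
\]
and the stepsize condition $\tfrac1\alpha > \tfrac{L}{2} + \beta\zeta$ (which is $\tfrac1\alpha > \tfrac18\|\mathbf{X}\|^2 + \beta\zeta$) makes the coefficient $c := \tfrac1\alpha - \beta\zeta - \tfrac{L}{2}$ strictly positive (uniformly in $k$ in the backtracking case, using $\alpha_k \ge \eta/L$). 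This proves the first bullet: the objective is non-increasing, and being bounded below by $0$ (both $l \ge 0$ and $J \ge 0$), it converges.

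For the second bullet, telescoping the sufficient-decrease inequality over $k = 1,\dots,M$ gives $c\sum_{k=1}^{M}\|\theta_k - \theta_{k-1}\|_2^2 \le \big(l(\theta_0)+\beta J(\theta_0)\big) - \big(l(\theta_M)+\beta J(\theta_M)\big) \le l(\theta_0) + \beta J(\theta_0) < \infty$, so the series $\sum_k \|\theta_k - \theta_{k-1}\|_2^2$ converges and hence $\|\theta_k - \theta_{k-1}\|_2 \to 0$. For the third bullet, I would write down the first-order optimality condition of the prox minimization that defines $\theta_{k}$: there exists $\mathbf{g}_k \in \partial G(\theta_k)$ — equivalently $\mathbf{g}_k - 2\zeta\theta_k \in \partial J(\theta_k)$ in the paper's notation — with
\[
0 = \beta\big(\mathbf{g}_k - 2\zeta\theta_k\big) + \frac{1}{\alpha_{k-1}}\big(\theta_k - \theta_{k-1} + \alpha_{k-1}\nabla l(\theta_{k-1})\big),
\]
so that
\[
\beta\mathbf{g}_k - 2\beta\zeta\theta_k + \nabla l(\theta_k) = -\frac{1}{\alpha_{k-1}}(\theta_k - \theta_{k-1}) + \big(\nabla l(\theta_k) - \nabla l(\theta_{k-1})\big).
\]
Both terms on the right go to $0$: the first because $\|\theta_k - \theta_{k-1}\|_2 \to 0$ and $1/\alpha_{k-1} \le 1/(\eta/L) = L/\eta$ stays bounded, the second by Lipschitz continuity of $\nabla l$ together with $\|\theta_k - \theta_{k-1}\|_2 \to 0$. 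This establishes \eqref{claim3}.

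The only genuinely delicate point, and the one I would highlight, is ensuring the stepsize stays in the admissible range $\beta\zeta\alpha_k < 1/2$ throughout so that each prox subproblem is strongly convex and the comparison inequality above is valid; for the constant stepsize this is immediate from \eqref{alpha} since $\tfrac1\alpha > 2\beta\zeta$, but for the backtracking rule one must argue inductively that $\beta\zeta\alpha_{k} \le \beta\zeta\alpha_{k-1} \le \cdots \le \beta\zeta\alpha_0 < 1/2$ because $\alpha_k = \eta^{n_k}\alpha_{k-1} \le \alpha_{k-1}$, and separately that the backtracking loop halts, giving the lower bound $\alpha_k \ge \eta/L$ that is needed to keep the constant $c$ bounded away from $0$ and $1/\alpha_{k-1}$ bounded above. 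Everything else is the routine descent-lemma-plus-telescoping bookkeeping.
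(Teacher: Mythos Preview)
Your proposal is correct and follows essentially the same descent-lemma-plus-strong-convexity argument as the paper's proof (Lipschitz bound on $l$, strong convexity of the prox subproblem with modulus $1/\alpha_k - 2\beta\zeta$, combine, telescope, then read off the prox optimality condition). Two small remarks: your first displayed comparison inequality has a spurious $+\tfrac{1}{2\alpha}\|\theta^+-\theta\|_2^2$ on the right and the ensuing ``i.e.'' carries the coefficient $\tfrac{1}{2\alpha}-\beta\zeta$ instead of the correct $\tfrac{1}{\alpha}-\beta\zeta$, though your final sufficient-decrease constant $c=\tfrac{1}{\alpha}-\beta\zeta-\tfrac{L}{2}$ is right; and your explicit lower bound $\alpha_k\ge \min(\alpha_0,\eta/L)$ in the backtracking case is actually a useful addition, since the paper only records that $1/\alpha_k$ is nondecreasing, which by itself does not give the boundedness needed for the third bullet.
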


\begin{remark}
A constant stepsize depending on the maximum eigenvalue of
the data matrix $\bf X$ is able to guarantee convergence,
but when $\bf X$ is of huge size or distributed and its eigenvalue
is not attainable, a backtracking stepsize which does not depend
on such information can be used.
Note that because $l$ is Lipchitz differentiable,
$n_k$ in the backtracking method always exists.
\end{remark}

\begin{remark}
If the sequence $\{\bm{\theta}_k\}$ has limit points, then
the third conclusion means that every limit point of the sequence $\{\bm{\theta}_k\}$
is a critical point of the objective function.
\end{remark}

\begin{remark}
According to Theorem \ref{thm:converge},
the objective function converges,
so we can choose $\epsilon_{\mathrm{tol}}>0$ and set the following
\begin{align}\label{stopping}
|l(\bm{\theta}_{k+1}) + \beta J(\bm{\theta}_{k+1})
- l(\bm{\theta}_k) - \beta J(\bm{\theta}_k)| \leq \epsilon_{\mathrm{tol}}
\end{align}
 as a stopping criterion.
\end{remark}

\begin{proof}
	The techniques used in this proof are similar to the ones
	in \cite{Teboulle2009,beck2009}.
	To begin with, we are going to prove that the objective function in problem \eqref{main}
	is able to decrease monotonically during the iterations.
	First of all, notice the fact that the gradient of function $l$ is Lipchitz continuous,
	which gives the following inequality according to the Lipchitz property
	\begin{align}\label{L-property}
	l(\bm{\theta}_k) +\beta J(\bm{\theta}_k) 
	\leq  l(\bm{\theta}_{k-1}) + \nabla l(\bm{\theta}_{k-1})^{\rm T} (\bm{\theta}_k - \bm{\theta}_{k-1})
	 + \frac{L}{2} \|\bm{\theta}_k - \bm{\theta}_{k-1}\|_2^2 +\beta  J(\bm{\theta}_k),
	\end{align}
	where $L$ is the Lipchitz constant.
	If the backtracking stepsize is used, then we have
	\begin{align}\label{backtracking}
	l(\bm{\theta}_k) +\beta J(\bm{\theta}_k) \leq  
	l(\bm{\theta}_{k-1}) + \nabla l(\bm{\theta}_{k-1})^{\rm T}(\bm{\theta}_k - \bm{\theta}_{k-1})
	+ \frac{1}{2\alpha_k}\| \bm{\theta}_{k-1} - \bm{\theta}_{k} \|_2^2 +\beta  J(\bm{\theta}_k).
	\end{align}
	Secondly, according to our update rule,
	$\bm{\theta}_{k}$ minimizes the following function of $\bf u$
	\[
	\beta J({\bf u}) + \frac{1}{2\alpha_k} \|{\bf u}-\bm{\theta}_{k-1}\|_2^2 + \nabla l^{\rm T}(\bm{\theta}_{k-1}) ({\bf u}-\bm{\theta}_{k-1}),
	\]
	which is $\mu_k$-strongly convex given that
	$$\frac{\mu_k}{2} = \frac{1}{2\alpha_k} - \beta\zeta >0.$$
	Thus, we have that
	\begin{align}\label{mu_strongly}
	\beta J(\bm{\theta}_k) + \nabla l^{\rm T}(\bm{\theta}_{k-1}) (\bm{\theta}_k - \bm{\theta}_{k-1})
	+ \frac{1}{2\alpha_k} \| \bm{\theta}_{k} - \bm{\theta}_{k-1}\|_2^2
	\leq \beta J(\bm{\theta}_{k-1})  -  \frac{\mu_k}{2} \|\bm{\theta}_k - \bm{\theta}_{k-1}\|^2.
	\end{align}
	Combining \eqref{mu_strongly} with \eqref{backtracking} yields that
	\begin{align*}
	l(\bm{\theta}_k) + \beta J(\bm{\theta}_k) 
	\leq l(\bm{\theta}_{k-1})  + \beta J(\bm{\theta}_{k-1}) - \frac{\mu_k}{2} \|\bm{\theta}_k - \bm{\theta}_{k-1}\|_2,
	\end{align*}
	which means that the objective function is monotonically non-increasing
	with the backtracking stepsize.
	For the constant stepsize $\alpha = \alpha_k$, combining \eqref{mu_strongly} with \eqref{L-property}, we have
	\begin{align*}
	& \beta J(\bm{\theta}_k) + l(\bm{\theta}_k ) 
	\leq  l(\bm{\theta}_{k-1}) 
		+ \frac{L-1/\alpha - \mu}{2} \|\bm{\theta}_k - \bm{\theta}_{k-1}\|_2^2
		+\beta  J(\bm{\theta}_{k-1}).
	\end{align*}
	For $\alpha$ small enough such that
	$$L-1/\alpha-\mu < 0,$$
	we have that the objective function is non-increasing.
	Because
	\begin{align*}
	L \leq \sup_{\bm{\theta}} \|H(\bm{\theta})\|  = \frac{1}{4} \|{\bf X}\|^2,
	\end{align*}
	a sufficient condition for
	$$L-\frac{1}{\alpha} - \mu = L - \frac{2}{\alpha}  + 2\beta\zeta <0$$
	is that 
	$$\frac{1}{8}\|{\bf X}\|^2 +\beta\zeta < \frac{1}{\alpha},$$
	which is the requirement of the constant stepsize.
	Together with the fact that the objective function is lower bounded,
	we have proved the first claim in the Theorem \ref{thm:converge}.
	
	The second claim can be seen from that
	\begin{align*}
	0  \leq & \frac{1/\alpha+\mu-L}{2} \|\bm{\theta}_k - \bm{\theta}_{k-1}\|_2^2\\
	\leq & l(\bm{\theta}_{k-1}) + \beta J(\bm{\theta}_{k-1}) - (l(\bm{\theta}_{k}) + \beta J(\bm{\theta}_{k}))
	\end{align*}
	holds for the constant stepsize, and
		\begin{align*}
		0  \leq  &\frac{\mu_k}{2} \|\bm{\theta}_k-\bm{\theta}_{k-1}\|_2^2 \\
		\leq  & l(\bm{\theta}_{k-1}) + \beta J(\bm{\theta}_{k-1}) - (l(\bm{\theta}_{k}) + \beta J(\bm{\theta}_{k}))
		\end{align*}
	holds for the backtracking stepsize.
	Note that $\mu_k$ is nondecreasing during the iterations,
	so $\|\bm{\theta}_k - \bm{\theta}_{k-1} \|_2$ converges to $0$ in both cases.
	
	To see the third claim, remind that the update rule indicates that
	\begin{align*}
	{\bf 0} \in \beta \partial G(\bm{\theta}_{k})  - 2\beta\zeta \bm{\theta}_{k} + \nabla l(\bm{\theta}_{k-1})
	+\frac{1}{\alpha_k} (\bm{\theta}_{k}-\bm{\theta}_{k-1}),
	\end{align*}
	so there exists ${\bf g}_{k} \in G(\bm{\theta}_{k})$ for every $k$ such that
	\begin{align*}
	\beta {\bf g}_k - 2\beta\zeta\bm{\theta}_k + \nabla l(\bm{\theta}_k) 
	 =& \beta {\bf g}_k - 2\beta\zeta\bm{\theta}_k + \nabla l(\bm{\theta}_{k-1})
	 +  \nabla l(\bm{\theta}_k) - \nabla l(\bm{\theta}_{k-1}) \\
	 =&  - \frac{1}{\alpha_k} (\bm{\theta}_{k}-\bm{\theta}_{k-1}) 
	 +  \nabla l(\bm{\theta}_k) - \nabla l(\bm{\theta}_{k-1}).
	\end{align*}
	Since $\nabla l$ is continuous, 
	as $\bm{\theta}_{k} - \bm{\theta}_{k-1} \rightarrow {\bf 0}$,
	for a constant stepsize \eqref{claim3} holds.
	For the backtracking stepsize, $1/\alpha_k$ is nondecreasing,
	so \eqref{claim3} also holds.
\end{proof}

According to Theorem \ref{thm:converge}, the algorithm can be summarized
in Table \ref{alg:prox_g}.

\begin{table}
	\caption{
		Proximal gradient descent for weakly convex regularized logistic regression.
	}
	\centering
	\begin{tabular}{l}
		\toprule \textbf{Input}: initial point $\bm{\theta}_0$, $\alpha_0<1/(2\beta\zeta)$ (or $\alpha$ satisfying \eqref{alpha}), $\epsilon_{\mathrm{tol}}>0$.\\
		\hline
		$k = 0$;\\
		\textbf{Repeat}:\\
		\quad update $\bm{\theta}_{k+1}$ by \eqref{alg} using constant or backtracking stepsize $\alpha_{k+1}$;\\
		\quad $k = k+1$; \\
		\textbf{Until} stopping criterion \eqref{stopping} is satisfied.\\
		\bottomrule
	\end{tabular}
	\label{alg:prox_g}
\end{table}

\section{A specific weakly convex function and iterative firm-shrinkage method}
\label{sec:specific}
In this section, we take the weakly convex function $J$
to be the specific one defined by $F$ in \eqref{F},
in that its proximal operator has a closed form
expression  \eqref{prox_F} that is easy to compute.
We will first show a sufficient and necessary condition
for local optimality, and then discuss the proximal gradient
descent method studied in the previous section for this specific case.

\subsection{Local optimality}
With a specific function $J$ defined by $F$ in \eqref{F} we have the following conclusion on its local optimality.
\begin{theorem}\label{loc_opt}
	Suppose that function $J$ is defined by $F$ in \eqref{F}, 
	and $\beta\zeta >0.125\|{\bf X}\|^2$.
	Then $\bm{\theta}^\ast$ is a local minimum of problem \eqref{main}, 
	if and only if one of the following conditions holds for every 
	$\bm{\theta}^\ast_j$,  $j = 1,\ldots,d$.
	\begin{itemize}
		\item $\bm{\theta}^\ast_j = 0$ and $|\nabla l(\bm{\theta}^\ast)_j| < \beta$;
		\item $|\bm{\theta}^\ast_j| > \frac{1}{2\zeta}$ and
		$\nabla l(\bm{\theta}^\ast)_i = 0$.
	\end{itemize}
\end{theorem}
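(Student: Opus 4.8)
The plan is to read off the conditions of the general local-optimality results Theorem~\ref{thm:local_opt_suf} and Theorem~\ref{thm:local_opt_nec} for the MCP choice \eqref{F}, and then to close the gap between the sufficient and the necessary condition using the standing hypothesis $\beta\zeta > 0.125\|{\bf X}\|^2$. The first step is to write $H(t) = F(t) + \zeta t^2$ explicitly: $H(t) = |t|$ for $|t| \le \frac{1}{2\zeta}$, and $H(t) = \frac{1}{4\zeta} + \zeta t^2$ for $|t| > \frac{1}{2\zeta}$. From this one reads off that $H$ (equivalently $F$) is differentiable everywhere except at $0$, where $H^\prime_-(0) = -1$ and $H^\prime_+(0) = 1$; that $H^{\prime\prime}(t) = 0$ for $0 < |t| < \frac{1}{2\zeta}$; that at $t = \pm\frac{1}{2\zeta}$ the function $H$ is $C^1$ with one one-sided second derivative equal to $0$ and the other equal to $2\zeta$; and that $H^\prime(t) = 2\zeta t$, $H^{\prime\prime}(t) = 2\zeta$ for $|t| > \frac{1}{2\zeta}$. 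In particular the second-derivative hypotheses of both theorems are satisfied.

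For the ``if'' direction I would apply Theorem~\ref{thm:local_opt_suf} coordinate by coordinate. When $\bm{\theta}^\ast_j = 0$, the function $F$ is non-differentiable there and \eqref{cond1} becomes $-\frac{1}{\beta}\nabla l(\bm{\theta}^\ast)_j \in (-1,1)$, i.e.\ $|\nabla l(\bm{\theta}^\ast)_j| < \beta$, the first listed condition. When $|\bm{\theta}^\ast_j| > \frac{1}{2\zeta}$, the function $F$ is differentiable at $\bm{\theta}^\ast_j$ with $H^\prime(\bm{\theta}^\ast_j) = 2\zeta\bm{\theta}^\ast_j$ and $H^{\prime\prime}_\pm(\bm{\theta}^\ast_j) = 2\zeta \ge 2\zeta$, so \eqref{cond2} reduces to $\nabla l(\bm{\theta}^\ast)_j = 0$, the second listed condition. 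Thus the two listed conditions force $\bm{\theta}^\ast$ to be a local minimum; note this direction does not even need the assumption on $\beta\zeta$.

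For the ``only if'' direction I would invoke Theorem~\ref{thm:local_opt_nec}. Here the assumption $\beta\zeta > 0.125\|{\bf X}\|^2$ is precisely $2\zeta - 0.25\|{\bf X}\|^2/\beta > 0$, so the threshold appearing in the differentiable branch of Theorem~\ref{thm:local_opt_nec} is strictly positive; since at every point $t$ with $0 < |t| \le \frac{1}{2\zeta}$ at least one of $H^{\prime\prime}_-(t)$, $H^{\prime\prime}_+(t)$ equals $0$, no coordinate of a local optimum can satisfy $0 < |\bm{\theta}^\ast_j| \le \frac{1}{2\zeta}$. When $|\bm{\theta}^\ast_j| > \frac{1}{2\zeta}$, \eqref{cond2} again forces $\nabla l(\bm{\theta}^\ast)_j = 0$. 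When $\bm{\theta}^\ast_j = 0$, Theorem~\ref{thm:local_opt_nec} only gives the non-strict bound $|\nabla l(\bm{\theta}^\ast)_j| \le \beta$ through \eqref{inclusion}, so the remaining task is to rule out the boundary case $|\nabla l(\bm{\theta}^\ast)_j| = \beta$.

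This last point is the main obstacle, and I expect to settle it by a direct one-coordinate perturbation rather than by the general theorem. Suppose $\bm{\theta}^\ast_j = 0$ and, say, $\nabla l(\bm{\theta}^\ast)_j = -\beta$ (the case $+\beta$ being symmetric, perturbing in the opposite direction). Moving only the $j$th coordinate to a small $t > 0$, the descent lemma for $l$ (whose gradient is Lipschitz with constant $\frac14\|{\bf X}\|^2$) together with $\beta F(t) = \beta t - \beta\zeta t^2$ valid for $0 < t \le \frac{1}{2\zeta}$ show that the objective changes by at most
\[
-\beta t + \frac{1}{8}\|{\bf X}\|^2 t^2 + \beta t - \beta\zeta t^2
= \left( \frac{1}{8}\|{\bf X}\|^2 - \beta\zeta \right) t^2 ,
\]
which is strictly negative because $\beta\zeta > 0.125\|{\bf X}\|^2$; hence $\bm{\theta}^\ast$ is not a local minimum. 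Consequently a local minimum must satisfy $|\nabla l(\bm{\theta}^\ast)_j| < \beta$ whenever $\bm{\theta}^\ast_j = 0$, and combining the three cases gives the stated necessity and completes the equivalence.
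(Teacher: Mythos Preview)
Your proposal is correct and follows essentially the same route as the paper: both directions are obtained by specializing Theorems~\ref{thm:local_opt_suf} and \ref{thm:local_opt_nec} to the MCP function \eqref{F}, and the boundary case $\bm{\theta}^\ast_j=0$, $|\nabla l(\bm{\theta}^\ast)_j|=\beta$ is eliminated by a one-coordinate perturbation combining the descent lemma for $l$ with the explicit form of $F$ near $0$. The only cosmetic difference is that the paper phrases the final calculation through the equivalent local-optimality inequality \eqref{local_opt_ineq} in terms of $G$, whereas you work directly with $l+\beta J$; the resulting inequality $(\tfrac18\|{\bf X}\|^2-\beta\zeta)t^2<0$ is identical.
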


\begin{remark}
	If the training data points are linearly separable, i.e., there exists $\|\bm{\theta}_0\|_2 = 1$ 
	such that $ \bm{\theta}_0^{\rm T} {\bf x}^{(i)} \neq 0$ and
	\begin{align*}
	y^{(i)} = \left\{
	\begin{array}{ll}
	1, &\quad \bm{\theta}_0^{\rm T} {\bf x}^{(i)} > 0;\\
	0, &\quad \bm{\theta}_0^{\rm T} {\bf x}^{(i)} < 0,
	\end{array}
	\right.
	\end{align*}
	holds for all $i = 1,\ldots, N$,
	then 
	\[
	\nabla l(t \bm{\theta}_0) \rightarrow 0, \quad t \rightarrow + \infty,
	\]
	so we have that
	$$ \lim_{t\rightarrow +\infty} l(t\bm{\theta}_0) + \beta J(t\bm{\theta}_0) = l^\ast + \beta J^\ast $$
	is a local optimal value. For such reason in \cite{Loh2013} a constraint on a norm of $\bm{\theta}$
	is added in their optimization problem.
	However, here we note that for a given $t > 0$ the following bound holds
	$$ 0 \leq l(t\bm{\theta}_0) + \beta J(t\bm{\theta}_0) - l^\ast - \beta J^\ast 
	\leq \sum_{i=1}^N \mathrm{exp}\left(-t\left| \bm{\theta}_0^{\rm T} {\bf x}^{(i)}\right| \right) .$$
	The above upper bound is exponential in $t$ and decreases to $0$,
	so with a sufficiently large $t$, a point $t \bm{\theta}_0$ can give an objective value
	numerically sufficiently close to $ l^\ast + \beta J^\ast $.
\end{remark}

\begin{remark}
From Theorem \ref{loc_opt} we know that any $\bm{\theta}$ with any entry of which the absolute value is in $(0,\frac{1}{2\zeta}]$,
where $\frac{1}{2\zeta}\leq \beta/(0.25\|{\bf X}\|^2)$, 
is excluded from the solutions.
Thus, if we wish to obtain an estimated $\bm{\theta}$ with entries either have 
large enough absolute values or $0$,
then we can set such a threshold by $\zeta$.
\end{remark}

\begin{proof}
	The sufficiency can be directly obtained from Theorem \ref{thm:local_opt_suf}.
	To see this, notice that $F$ is only not differentiable at $0$, $F^\prime(0) = 1$,
	and the function
	\begin{align*}
	H(t) = \left\{
	\begin{array}{ll}
	|t|, &\quad |t| \leq \frac{1}{2\zeta} \\
	\frac{1}{4\zeta} + \zeta t^2,  &\quad |t| > \frac{1}{2\zeta} 
	\end{array}
	\right.
	\end{align*}
	satisfies the condition that both $H^{\prime\prime}_+$ and $H^{\prime\prime}_-$ are no less than $\zeta$ when and only when $|t| > 1/(2\zeta)$, where $H^\prime(t) = 2\zeta t$.
	
	From Theorem \ref{thm:local_opt_nec}, together with the assumption that
	$\beta\zeta>0.125\|{\bf X}\|^2$, we can directly have a necessary condition that every $\bm{\theta}^\ast_j$ satisfies one of the following
		\begin{itemize}
			\item $\bm{\theta}^\ast_j = 0$ and $|\nabla l(\bm{\theta}^\ast)_j| \leq \beta$;
			\item $|\bm{\theta}^\ast_j| > \frac{1}{2\zeta}$ and
			$\nabla l(\bm{\theta}^\ast)_j = 0$.
		\end{itemize}
	Thus, to prove Theorem \ref{loc_opt}, we only need to show that
	if there is a coordinate $i$ such that $\bm{\theta}^\ast_i = 0$ and $|\nabla l(\bm{\theta}^\ast)_i| = \beta$,
	then $\bm{\theta}^\ast$ is not a local optimum. To see this, we take
	\[\bm{\theta} = (\bm{\theta}^\ast_1, \ldots, \bm{\theta}^\ast_{i-1}, - t,
	\bm{\theta}^\ast_{i+1},\ldots,\bm{\theta}^\ast_d),\]
	and we will prove that for any $0<\delta<1/(2\zeta)$, there exists $t$ such that
	$0<|t| \leq \delta$ and the local optimality inequality \eqref{local_opt_ineq} does not hold. 
	According to the Lipchitz condition, we have
	\[
	l(\bm{\theta}) - l(\bm{\theta}^\ast) 
	\leq -t\nabla l(\bm{\theta}^\ast)_i + \frac{1}{8} \|{\bf X}\|^2 t^2.
	\]
	Since $\bm{\theta}^\ast_i = 0$ and $|\bm{\theta}_i|\leq 1/(2\zeta)$, we have
	\[
	G(\bm{\theta}^\ast) - G(\bm{\theta}) = -|t|.
	\]
	If $\nabla l(\bm{\theta}^\ast)_i = \beta$, then for any $t>0$
	\begin{align*}
	 l(\bm{\theta}) - l(\bm{\theta}^\ast) +\beta G(\bm{\theta}) - \beta G(\bm{\theta}^\ast) 
	+ 2\beta\zeta (\bm{\theta}^\ast - \bm{\theta})^{\rm T}  \bm{\theta}^\ast 
	- \beta\zeta\|\bm{\theta}^\ast - \bm{\theta}\|^2 
	=&l(\bm{\theta}) - l(\bm{\theta}^\ast) + \beta |t| - \beta\zeta t^2 \\
	\leq& \frac{1}{8} \|{\bf X}\|^2 t^2  - \beta\zeta t^2
	<0.
	\end{align*}
	If $\nabla l(\bm{\theta}^\ast)_i = - \beta$, then for any $t<0$ the above inequality holds.
	Therefore, we prove that the local optimality inequality \eqref{local_opt_ineq}
	cannot hold within any small neighborhood of $\bm{\theta}^\ast$, and $\bm{\theta}^\ast$
	is not a local optimum.
\end{proof}

\subsection{Iterative firm-shrinkage algorithm}

When the function $J$ is defined by $F$ in \eqref{F},
the proximal gradient method in Table \ref{alg:prox_g} discussed in section \ref{sec:solution_method}
is instantiated and can be understood as a generalization of the
iterative shrinkage-thresholding algorithm (ISTA) used
to solve $\ell_1$ regularized least square problems \cite{Teboulle2009,Hale2007}.
As the concrete proximal operator defined in \eqref{prox_F} 
has been named as the firm-shrinkage operator,
we call the method an \emph{iterative firm-shrinkage algorithm} (IFSA).

According to Theorem \ref{thm:converge}, for IFSA
if a constant or backtracking stepsize satisfying Theorem \ref{thm:converge}
is used, then we know that 
the objective function is non-increasing and convergent,
that the update $\|\bm{\theta}_k - \bm{\theta}_{k-1}\|_2$ goes to $0$,
and that any limit point of $\{\bm{\theta}_k\}$ (if there is any) is a critical point
of the objective function.

To accelerate the convergence of a proximal gradient method,
the Nesterov acceleration \cite{Nesterov1983} has been used
in ISTA \cite{Teboulle2009}, in which the convergence rate has been
accelerated from $O(1/k)$ to $O(1/k^2)$.
Such technique is also applicable to the proximal gradient method IFSA.
While the convergence analysis under such acceleration is not in the scope of this work,
we have the algorithm summarized in Table \ref{alg:acc_IFSA},

\begin{table}
	\caption{
		Iterative firm shrinkage algorithm with acceleration.
	}
	\centering
	\begin{tabular}{l}
		\toprule \textbf{Input}: initial point $\hat{\bm{\theta}}_0$, $\alpha_0<1/(2\beta\zeta)$ (or $\alpha$ satisfying \eqref{alpha}).\\
		\hline
		$k = 1$, $t_1=1$, $\bm{\theta}_1 = \hat{\bm{\theta}}_0$;\\
		\textbf{Repeat}:\\
		\quad update \\
		\quad\quad $\hat{\bm{\theta}}_{k} = \prox_{\alpha_k\beta J} (\bm{\theta}_k - \alpha_k \nabla l(\bm{\theta}_k))$ \\
		\quad \quad according to \eqref{prox_F} by constant or backtracking stepsize;\\
		\quad update $t_{k+1} = \frac{1+\sqrt{1+4t_k^2}}{2}$;\\
		\quad update $\bm{\theta}_{k+1} = \hat{\bm{\theta}}_{k} + \left(\frac{t_k-1}{t_{k+1}} \right)
		(\hat{\bm{\theta}}_k - \hat{\bm{\theta}}_{k-1})$; \\
		\quad $k = k+1$; \\
		\textbf{Until} maximum number of iterations is reached.\\
		\bottomrule
	\end{tabular}
	\label{alg:acc_IFSA}
\end{table}

\section{Numerical experiments}
\label{sec:exp}

In this section, we demonstrate numerical results 
of the weakly convex regularized sparse logistic regression
\eqref{main} with function $J$ specifically defined by $F$ in \eqref{F}.
The solving method IFSA is implemented and tested both with and without acceleration.
As a comparison,
we also show results of $\ell_1$ logistic regression,
of which there are more than one equivalent forms
and we use the one in \eqref{prob:l1}. 
There are many algorithms for $\ell_1$ logistic regression,
and we simply use a generic solver SCS interfaced by CVXPY \cite{cvxpy},
in that in such comparison we focus on replacing the $\ell_1$
norm with a weakly convex function.

\subsection{Randomly generated datasets}

\subsubsection{One example for convergence demonstration}
To begin with, for one example we show the convergence curves 
and the estimated $\bm{\theta}/\|\bm{\theta}\|_2$
of our algorithm with different constant stepsizes.
The dimensions of the data are $d = 50$, $N = 1000$, and $K=8$.
The data matrix is generated by ${\bf X} = {\bf AB}/\|{\bf AB}\|$, 
where ${\bf A}\in\reals^{50\times 45}$ and ${\bf B}\in\reals^{45\times 1000}$
are Gaussian matrices,
so that the data points are in a latent $45$-dimensional subspace.
The positions of the non-zeros of the ground truth $\bm{\theta}_0$ is uniformly
randomly generated, and the amplitudes are uniformly distributed over
$[5,15]$. The label $y$ is generated according to ${\bf 1}(\bm{\theta}_0^{\rm T} {\bf x} \geq 0)$
so that the data points are linearly separable.

We set the regularization parameter $\beta = 1.2$ and the nonconvexity
parameter $\zeta = 0.1$.
To satisfy the convergence condition in Theorem \ref{thm:converge},
we need  $\alpha<4.08$.
As a comparison, we also solve the problem with $\zeta = 0$, i.e.,
the $\ell_1$ regularized logistic regression problem \eqref{prob:l1},
by CVXPY \cite{cvxpy}.

The results without and with the Nesterov acceleration
are shown in Fig. \ref{fig:exp1} and Fig. \ref{fig:exp1_acc},
respectively.
Fig. \ref{fig:exp1} shows that, with larger stepsize (within the range),
the objective function decreases faster, and when terminated at the
given number of iterations the estimated $\bm{\theta}/\|\bm{\theta}\|_2$ becomes
closer to the ground truth. 
Fig. \ref{fig:exp1_acc}
shows that with the acceleration the objective function decreases
faster for all the tested stepsizes,
and that the estimations of $\bm{\theta}/\|\bm{\theta}\|_2$
are better than the estimation obtained from the $\ell_1$ logistic regression.


\begin{figure}
	\centering
		\includegraphics[width=0.8\linewidth]{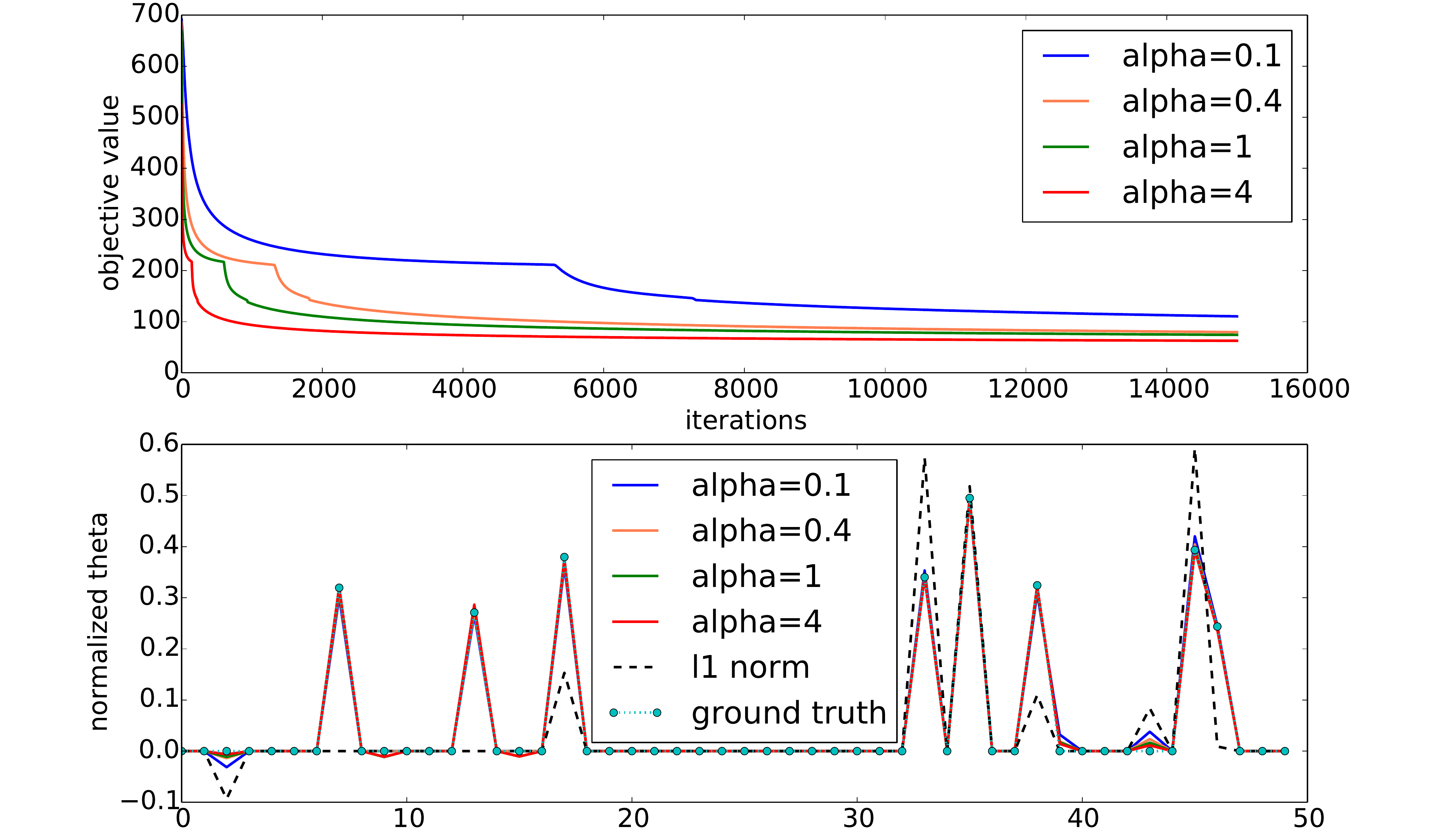}
			\caption{Performance in an example without acceleration.
				\emph{Upper:} objective value during iterations.
				\emph{Lower:} estimated $\bm{\theta}/\|\bm{\theta}\|_2$ and the ground truth.}
			\label{fig:exp1}
\end{figure}

\begin{figure}
	\centering
\includegraphics[width=0.8\linewidth]{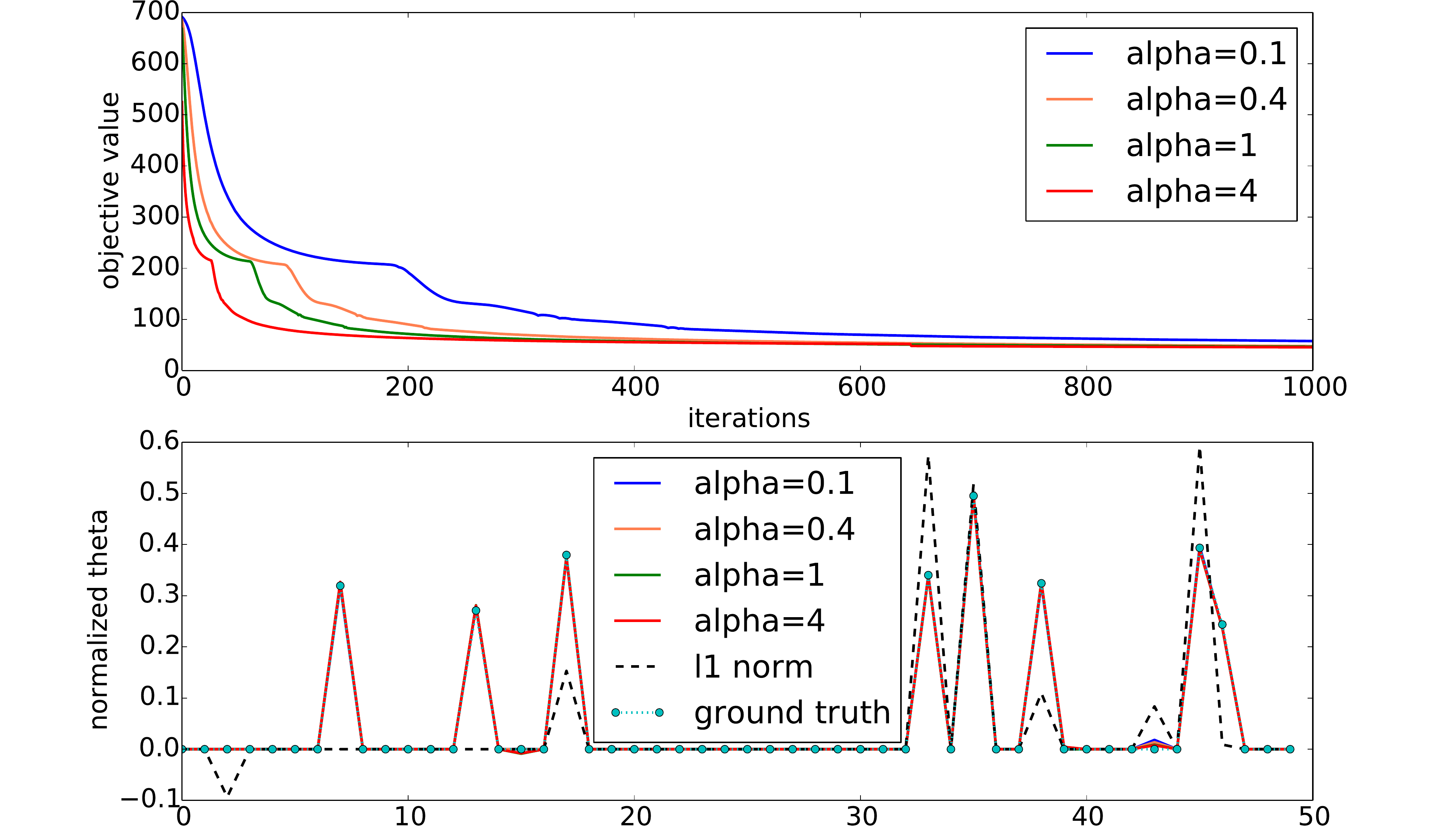}
	\caption{Performance in an example with acceleration.
		\emph{Upper:} objective value during iterations.
		\emph{Lower:} estimated $\bm{\theta}/\|\bm{\theta}\|_2$ and the ground truth.}
	\label{fig:exp1_acc}
\end{figure}

\subsubsection{Varying nonconvexity and regularization parameters}
In the second experiment, we demonstrate the performance under various 
choices of the parameters $\zeta$ and $\beta$.
The dimensions are $d = 50$, $K = 5$, and
$n = 200$.
The training data $\bf X$ is randomly generated from i.i.d.\ normal distribution,
and the ground truth $\bm{\theta}_0$ is generated by uniformly randomly choosing
$K$ nonzero elements with  i.i.d.\ normal distribution.
The step size of IFSA is chosen as $0.1$. 
The labels are generated so that the data points are linearly separable.
The results are in Fig. \ref{fig:zeta_beta}, where the horizontal axis
is the logarithm of $\zeta$, the vertical axis is the logarithm
of $\beta$, and the gray scale represents the logarithm of the test error
averaged from $10$ independently random experiments,
each of which is tested by $1000$ random test data points
which are generated in the same way as the training data points.

The results show that with a fixed value of $\beta$
from $10^{-2.8}$ to $10^{0.6}$,
as the value of $\zeta$ increases from $0$,
the test error first decreases and then increases,
and there is always a choice of $\zeta>0$ under which
the test error is smaller than the test error with $\zeta =0$
which is the $\ell_1$ logistic regression.
The results in Fig. \ref{fig:zeta_beta} verify our motivation
that weakly convex regularized logistic regression can better estimate
the sparse model than the $\ell_1$ logistic regression
and enhance test accuracy.

\begin{figure}
	\centering
	\includegraphics[width=0.7\textwidth]{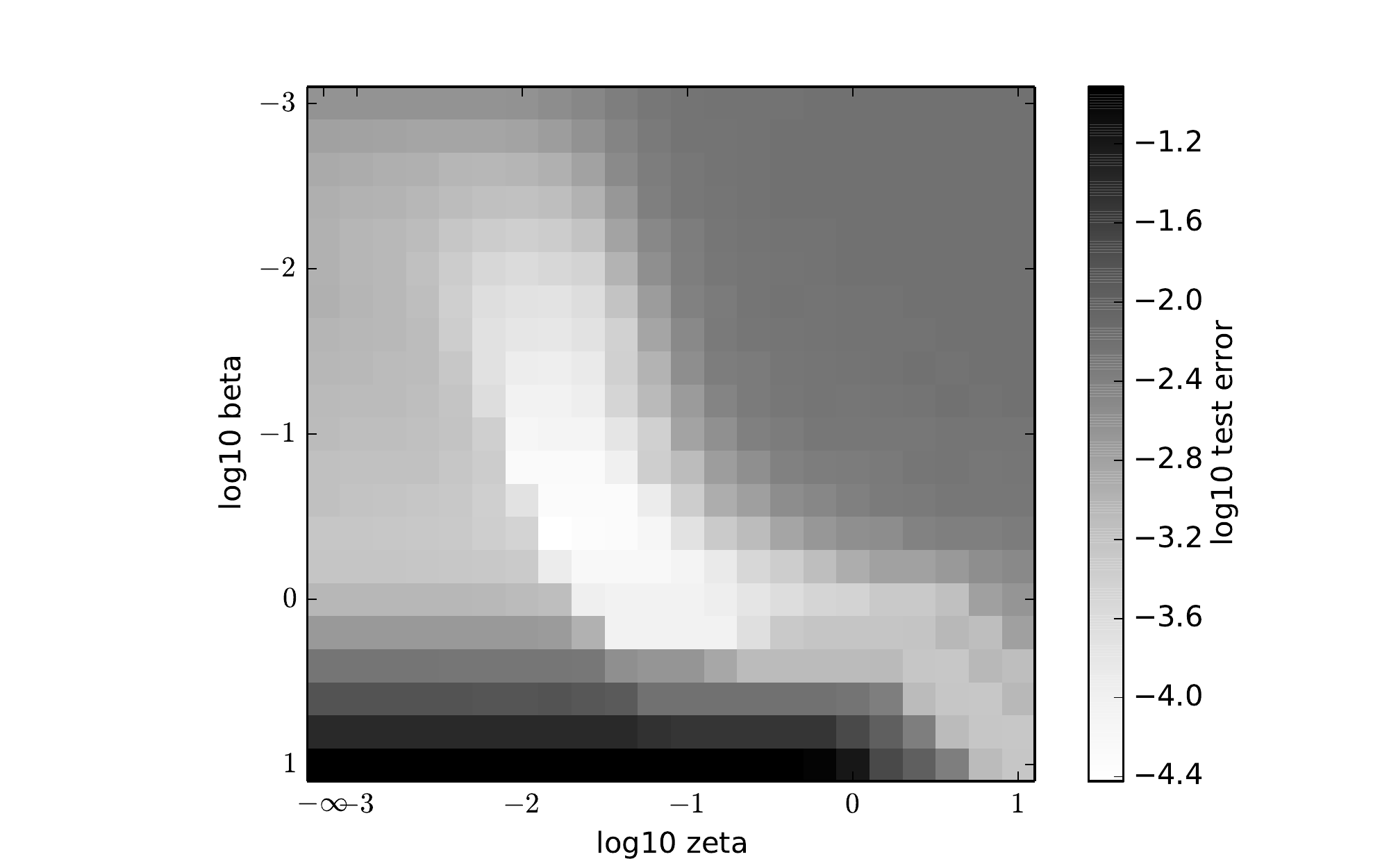}
	\caption{Logarithm of test error under various values of $\zeta$ and $\beta$.}
	\label{fig:zeta_beta}
\end{figure}

\subsubsection{Non-separable datasets}
In the above two settings the data points are linearly separable,
while in this part we will show test errors when the training data points
are not linearly separable.
To be specific, the label $y$ of a training data $\bf x$ is generated by
$y = {\bf 1}({\bf x}^{\rm T} \bm{\theta} + {\bf n} \geq 0),$
where $ {\bf n}$ is an additive noise generated from the Gaussian distribution
$\mathcal{N}(0,\epsilon^2 {\bf I})$.
The training data matrix $\bf X$, the ground truth model vector $\bm{\theta}_0$,
and the test data points
are randomly generated in the same way as the second experiment.

In the training process, under every noise level $\epsilon$,
we learned $\bm{\theta}$ under various $\beta$ from
$10^{-3}$ to $10$ and $\zeta$ from $0$ to $10$,
and we repeated it $10$ times with different random data points to take the averaged test errors for every pair of $\zeta$ and $\beta$.
For every noise level, we then took the lowest error rate obtained with $\zeta=0$ as the error rate of $\ell_1$ logistic regression 
and the lowest error rate obtained with $\zeta>0$ as the error rate
of weakly convex logistic regression. 
The results are summarized in Table \ref{unseparable}.

From the results, we can see that, 
as the noise level increases, the error rates of  both methods increase,
but  under every tested noise level the weakly convex logistic regression can achieve
lower error rate than the $\ell_1$ logistic regression.

\begin{table}
	\caption{
		Test error for non-separable data.
	}
	\centering
	\begin{tabular}{ccc}
		\toprule 
		noise level & $\ell_1$ logistic regression  & weakly convex logistic regression  \\ \hline
		$ 0.01$         & $3.27\%$  & $1.50\%$        \\ 
		$ 0.03$   & $4.94\%$     &$1.89\%$    \\
		$0.05$  & $4.21\%$  &$2.62\%$ \\
		$  0.1$  & $6.48\%$     & $5.43\%$\\ 
		$  0.3$ & $9.95\%$ & $9.27\%$ \\ 
		$  0.5$ & $20.6\%$ & $20.4\%$\\  \bottomrule
	\end{tabular}
	\label{unseparable}
\end{table}

\subsection{Real datasets}
In this part, we show experimental results of the
weakly convex logistic regression on real datasets
which have been commonly used in $\ell_1$ logistic regression,
and compare the classification error rates between these two
methods.
The first dataset is a spam email database \cite{Lichman2013}, where the number of features $57$
is far smaller than the number of data points $4601$,
of which $20\%$ are used for training.
The classification result indicates whether or not an email is a spam.
The second one is an arrhythmia dataset \cite{Lichman2013}
which has $279$ features and $452$ data points,
of which $80\%$ are used for training.
The two classes refer to the normal and arrhythmia,
and missing values in the features are filled with zeros.
The third one is a gene database from tumor and normal tissues \cite{Alon08061999}, 
where the number of features $2000$ is far larger than the number of data points
$62$, of which $40\%$ are used for training.
The classification result is whether or not it is a tumor tissue.
The test and training data points are randomly separated.

In the experiments,
we first run $\ell_1$ logistic regression with various $\beta$ on the training data
and use cross validation on the test data to find the best value of $\beta$
and the corresponding error rate.
Then we run the IFSA for weakly convex logistic regression
under the best $\beta$ with various $\zeta$,
and still use cross validation to get the best $\zeta$ and its error rate.

Results in Table~\ref{real_datasets} show that,
for the first dataset, where the number of features is far smaller than
the number of training data, the weakly convex logistic regression
has a little improvement over the $\ell_1$ regularized logistic regression.
For the second and the third datasets,
where the number of training data points is inadequate compared to the number of features,
the improvement achieved by weakly convex logistic regression is more significant.

\begin{table}
		\caption{
			Results on real datasets.
		}
		\centering
		\begin{tabular}{cccc}
			\toprule 
			              										& Spambase   & Arrhythmia & Colon   \\ \hline
		 number of training samples     & $921$           & $361$           & $25$     \\ 
		 number of features                    & $57$             & $279$         &$2000$ \\
		 number of test samples           & $3680$        & $91$         &$37$    \\
		$\ell_1$ logistic regression best $\beta$ & $0.0062$  & $0.01$ & $0.0046$\\ 
		$\ell_1$ logistic regression error rate       & $8.23\%$   & $24.18\%$ & $28\%$ \\ 
	    weakly convex best $\zeta$                      & $0.006$ & $0.0055$& $0.01$\\ 
		weakly convex error rate                            & $7.96\%$ & $18.68\%$ & $24\%$\\ \bottomrule
		\end{tabular}
	\label{real_datasets}
\end{table}

\section{Conclusion and future work}
In this work we study weakly convex regularized sparse logistic regression.
For a class of weakly convex sparsity inducing functions,
we first prove that the optimization problem with such functions as regularizers
is in general nonconvex, and then we study its local optimality conditions,
as well as the choice of the regularization parameter to exclude a trivial solution.
Even though the general problem is nonconvex,
a solution method based on the proximal gradient descent
is devised with theoretical convergence analysis.
Then the general framework is applied to a specific weakly convex function, and a necessary and
sufficient local optimality condition is unveiled.
The solution method for this specific case named iterative firm-shrinkage algorithm is implemented.
Its effectiveness is demonstrated in numerical experiments
by both randomly generated data and real datasets.

There can be several directions to extend this work, 
such as using only parts of the data in every iteration
by applying stochastic proximal gradient method.
More generally, weakly convex regularization could be used in
other  machine learning problems to fit sparse models.

\bibliographystyle{ieeetr}
\bibliography{refs1}

\end{document}